\documentclass{article} 
\usepackage{iclr2023_conference,times}


\usepackage{amsmath,amsfonts,bm}









\def\eqref#1{equation~\ref{#1}}









\def\1{\bm{1}}








\def\va{{\bm{a}}}

\def\ve{{\bm{e}}}

\def\vh{{\bm{h}}}

\def\vx{{\bm{x}}}
\def\vy{{\bm{y}}}


\def\mA{{\bm{A}}}

\def\mD{{\bm{D}}}

\def\mH{{\bm{H}}}
\def\mI{{\bm{I}}}

\def\mK{{\bm{K}}}

\def\mP{{\bm{P}}}
\def\mQ{{\bm{Q}}}

\def\mU{{\bm{U}}}
\def\mV{{\bm{V}}}
\def\mW{{\bm{W}}}
\def\mX{{\bm{X}}}
\def\mY{{\bm{Y}}}

\DeclareMathAlphabet{\mathsfit}{\encodingdefault}{\sfdefault}{m}{sl}
\SetMathAlphabet{\mathsfit}{bold}{\encodingdefault}{\sfdefault}{bx}{n}


\def\gN{{\mathcal{N}}}
\def\gO{{\mathcal{O}}}



\def\sR{{\mathbb{R}}}








\newcommand{\Ls}{\mathcal{L}}
\newcommand{\R}{\mathbb{R}}

\newcommand{\softmax}{\mathrm{softmax}}

\newcommand{\Var}{\mathrm{Var}}



\usepackage{hyperref}
\usepackage{url}
\usepackage[english]{babel}
\usepackage{amsthm}
\usepackage{amssymb}
\usepackage{amsmath}
\usepackage{amssymb}
\usepackage{mathtools}
\usepackage{multirow}
\usepackage{adjustbox}
\usepackage{booktabs}
\usepackage{caption}
\usepackage{subcaption}
\usepackage{colortbl}
\usepackage{wrapfig}

\theoremstyle{definition}

\newenvironment{manualproposition}[1]{%
  \manualtheoreminner
}{\endmanualtheoreminner}

\newtheorem{definition}{Definition}[section]
\newtheorem{lemma}{Lemma}[section]

\theoremstyle{remark}

\theoremstyle{theorem}

\newtheorem{proposition}{Proposition}

\linespread{0.949}

\title{ContraNorm: A Contrastive Learning Perspective on Oversmoothing and Beyond}

\author{Xiaojun Guo$^1$\thanks{Equal Contribution.}~~~ Yifei Wang$^{2*}$~~ Tianqi Du$^{2 *}$~~ Yisen Wang$^{1,3}$ \thanks{Corresponding Author: Yisen Wang (yisen.wang@pku.edu.cn)} \\
$^1$National Key Lab of General Artificial Intelligence,\\ \, School of Intelligence Science and Technology, Peking University\\
$^2$School of Mathematical Sciences, Peking University\\
$^3$Institute for Artificial Intelligence, Peking University
}

\iclrfinalcopy 
\begin{document}

\maketitle

\begin{abstract}
Oversmoothing is a common phenomenon in a wide range of Graph Neural Networks (GNNs) and Transformers, where performance worsens as the number of layers increases. Instead of characterizing oversmoothing from the view of \textit{complete collapse} in which representations converge to a single point, we dive into a more general perspective of \textit{dimensional collapse} in which representations lie in a narrow cone. Accordingly, inspired by the effectiveness of contrastive learning in preventing dimensional collapse, we propose a novel normalization layer called \textbf{ContraNorm}. Intuitively, ContraNorm implicitly shatters representations in the embedding space, leading to a more uniform distribution and a slighter dimensional collapse. On the theoretical analysis, we prove that ContraNorm can alleviate both complete collapse and dimensional collapse under certain conditions. Our proposed normalization layer can be easily integrated into GNNs and Transformers with negligible parameter overhead. Experiments on various real-world datasets demonstrate the effectiveness of our proposed ContraNorm. Our implementation is available at \url{https://github.com/PKU-ML/ContraNorm}.
\end{abstract}

\section{Introduction}

Recently, the rise of Graph Neural Networks (GNNs) has enabled important breakthroughs in various fields of graph learning \citep{ying2018graph, senior2020improved}. Along the other avenue, although getting rid of bespoke convolution operators, Transformers \citep{vaswani2017attention} also achieve phenomenal success in multiple natural language processing (NLP) tasks \citep{lan2019albert, liu2019roberta, rajpurkar2018know} and have been transferred successfully to computer vision (CV) field \citep{dosovitskiy2020an, liu2021swin, strudel2021segmenter}. Despite their different model architectures, GNNs and Transformers are both hindered by the oversmoothing problem \citep{li2018deeper, tang2021augmented}, where deeply stacking layers give rise to indistinguishable representations and significant performance deterioration. 

In order to get rid of oversmoothing, we need to dive into the modules inside and understand how oversmoothing happens on the first hand. However, we notice that existing oversmoothing analysis fails to fully characterize the behavior of learned features. A canonical metric for oversmoothing is the average similarity \citep{zhou2021deepvit, gong2021vision, wang2022anti}. The tendency of similarity converging to 1 indicates representations shrink to a single point (complete collapse). However, this metric can not depict a more general collapse case, where representations span a low-dimensional manifold in the embedding space and also sacrifice expressive power, which is called dimensional collapse (left figure in Figure \ref{method-illustration}). In such cases, the similarity metric fails to quantify the collapse level. Therefore, we need to go beyond existing measures and take this so-called \textit{dimensional collapse} into consideration. Actually, this dimensional collapse behavior is widely discussed in the contrastive learning literature \citep{jing2021understanding, hua2021feature, chen2021exploring, grill2020bootstrap}, which may hopefully help us characterize the oversmoothing problem of GNNs and Transformers. The main idea of contrastive learning is maximizing agreement between different augmented views of the same data example (i.e. positive pairs) via a contrastive loss. Common contrastive loss can be decoupled into alignment loss and uniformity loss \citep{wang2020understanding}. The two ingredients correspond to different objectives: alignment loss expects the distance between positive pairs to be closer, while uniformity loss measures how well the embeddings are uniformly distributed. 
Pure training with only alignment loss may lead to a trivial solution where all representations shrink to one single point. Fortunately, the existence of uniformity loss naturally helps solve this problem by drawing representations evenly distributed in the embedding space.

Given the similarities between the oversmoothing problem and the representation collapse issue, we establish a connection between them. Instead of directly adding uniformity loss into model training, we design a normalization layer that can be easily used out of the box with almost no parameter overhead. To achieve this, we first transfer the uniformity loss used for training to a loss defined over graph node representations, thus it can optimize representations themselves rather than model parameters. Intuitively, the loss meets the need of drawing a uniform node distribution. Following the recent research in combining optimization scheme and model architecture \citep{yang2021graph, zhu2021interpreting, xie2021optimization,chen2022optimization}, we use the transferred uniformity loss as an energy function underlying our proposed normalization layer, such that descent steps along it corresponds with the forward pass. By analyzing the unfolding iterations of the principled uniformity loss, we design a new normalization layer \textbf{ContraNorm}. 
\begin{wrapfigure}{r}{0.5\textwidth}
  \begin{center}
    \includegraphics[width=0.48\textwidth]{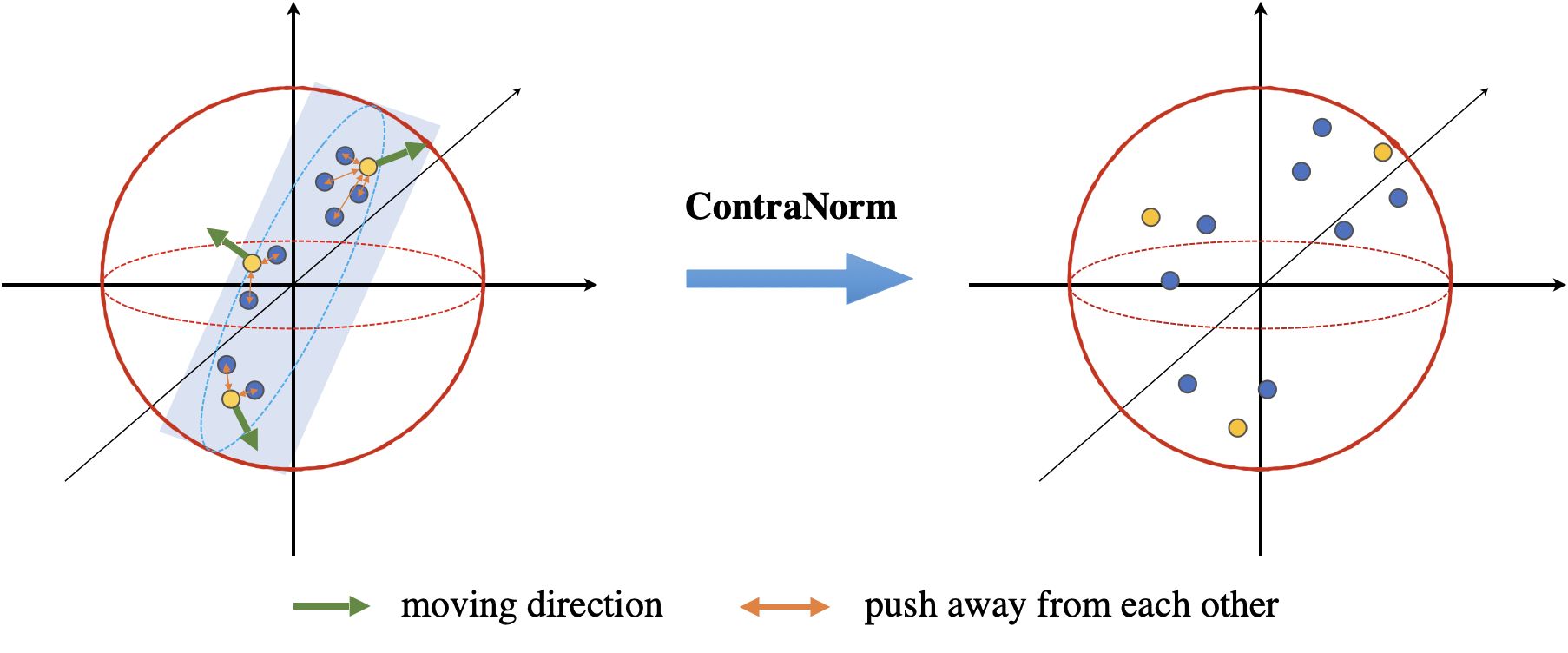}
  \end{center}
  \vspace{-0.1 in}
  \caption{An illustration of how our proposed ContraNorm solves the dimensional collapse. \textbf{Left}: Features suffer from dimensional collapse. \textbf{Right}: With the help of ContraNorm, features become more uniform in the space, and the dimensional collapse is eased.} \label{method-illustration}
  \vspace{-0.1 in}
\end{wrapfigure}
As a proof of concept, Figure \ref{method-illustration} demonstrates that ContraNorm makes the features away from each other, which eases the dimensional collapse. Theoretically, we prove that ContraNorm increases both average variance and effective rank of representations, thus solving complete collapse and dimensional collapse effectively. We also conduct a comprehensive evaluation of ContraNorm on various tasks. Specifically, ContraNorm boosts the average performance of BERT \citep{devlin2018bert} from 82.59\% to 83.54\% on the validation set of General Language Understanding Evaluation (GLUE) datasets \citep{wang2018glue}, and raises the test accuracy of DeiT \citep{touvron2021training} with 24 blocks from 77.69\% to 78.67\% on ImageNet1K \citep{russakovsky2015imagenet} dataset. For GNNs, experiments are conducted on fully supervised graph node classification tasks, and our proposed model outperforms vanilla Graph Convolution Network (GCN) \citep{kipf2016semi} on all depth settings. Our contributions are summarized as:
\begin{itemize}
    \item We dissect the limitations of existing oversmoothing analysis, and highlight the importance of incorporating the dimensional collapse issue into consideration.
    \item Inspired by the techniques from contrastive learning to measure and resolve oversmoothing, we propose ContraNorm as an optimization-induced normalization layer to prevent dimensional collapse.
    \item Experiments on a wide range of tasks show that ContraNorm can effectively mitigate dimensional collapse in various model variants, and demonstrate clear benefits across three different scenarios: ViT for image classification, BERT for natural language understanding, and GNNs for node classifications. 
\end{itemize}

\section{Background \& Related Work}

\textbf{Message Passing in Graph Neural Networks.} In the literature of GNNs, message-passing graph neural networks (MP-GNNs) are intensively studied. It progressively updates representations by exchanging information with neighbors. The update of node $i$'s representation in $l$-th layer is formalized as
$
    \vh^{(l)}_i = \mbox{UPDATE}\big(\vh^{(l-1)}, \mbox{AGGREGATE}(\vh^{(l-1)}_i, \{\vh^{(l-1)}_j \left\vert \right.j \in \gN(i)\})\big),
$
where $\gN(i)$ denotes the neighborhood set of node $i$, $\mbox{AGGREGATE}(\cdot)$ is the procedure where nodes exchange message, and $\mbox{UPDATE}(\cdot)$ is often a multi-layer perceptron (MLP). A classical MP-GNNs model is GCN \citep{kipf2016semi}, which propagates messages among one-hop neighbors.

\textbf{Self-Attention in Transformers.} Transformers encode information in a global scheme with self-attention as the key ingredient \citep{vaswani2017attention}. Self-attention module re-weights intermediate representations by aggregating semantically near neighbors. Formally, it estimates similarities between key and query, namely self-attention matrix, as $\bar{\mA} = \mbox{softmax}(\mQ\mK^{\top})$, $\mQ=\mX\mW_Q$, and $\mK=\mX\mW_K$ where $\mX$, $\mW_Q$, and $\mW_K$ are the input, weight matrices for query and key, respectively. A multi-head self-attention module with a residual connection can be formularized as $\mbox{\rm{attn}}(\mX) = \mX + \sum_{k=1}^h \bar{\mA}_k\mX\mV_k\mW_k^{\top},$
where $h$ is the number of heads, and $\mV$, $\mW$ are weights for value and final output.

\textbf{Connections between Message Passing and Self-Attention.}
Note that the self-attention matrix can be seen as a normalized adjacent matrix of a corresponding graph \citep{shi2022revisiting}. Considering a weighted fully connected graph $G$ with adjacency matrix denoted as $\hat{\mA}$, we map nodes to token representations and set weights of the edge between node $i$ and node $j$ to $\exp({\mQ_i^{\top}\mK_j})$. Then $(i,j)$ entry of a normalized adjacency matrix is $\tilde{\mA}_{ij} = \hat{\mA}_{ij} / \hat{\mD}_{ij} = \exp({\mQ_i^{\top}\mK_j}) / \sum_k \exp(\mQ_i^{\top}\mK_k)$, where diagonal matrix $\hat{\mD}_{ii} = \sum_j\hat{\mA}_{ij}$. Apparently, $\tilde{\mA}$ is equal to the form of self-attention matrix defined in Transformers. Simultaneously, $\tilde{\mA}$ plays a major part in the message-passing scheme by deciding which nodes to exchange information with.

\textbf{Oversmoothing in GNNs and Transformers.} The term oversmoothing is firstly proposed by \citet{li2018deeper} in research of GNNs. Intuitively, representations converge to a constant after repeatedly exchanging messages with neighbors when the layer goes to infinity. \citet{zhou2020graph} mathematically proves that, under some conditions, the convergence point carries only information of the graph topology. Coincidentally, an oversmoothing-like phenomenon is also observed in Transformers. Unlike CNNs, Transformers can not benefit from simply deepening layers, and even saturates with depth increasing. Early works empirically ascribe it to attention/feature collapse or patch/token uniformity \citep{tang2021augmented, zhou2021deepvit, gong2021vision, yan2022addressing}. To be specific, the attention maps tend to be overly similar in later layers, whereby features insufficiently exchange information and lose diversity in the end. Outputs of pure transformers, i.e., attention without skip connections or MLPs, are also observed to converge to a rank-1 matrix \citep{dong2021attention}. For illustration proposes, we also refer to the degeneration issue in Transformers as oversmoothing. 

\textbf{Whitening Methods for Dimensional Collapse.} Whitening methods ensure that the covariance matrix of normalized outputs is diagonal, making dimensions mutually independent and implicitly solving dimensional collapse. \citet{huang2018decorrelated,huang2019iterative,siarohin2018whitening,ermolov2021whitening} all adopt the idea of whitening, but differ in calculating details of whitening matrix and application domain. Compared with them, we avoid complicated calculations on the squared root of the inverse covariance matrix and delicate design of backward pass for differentiability. Moreover, \citet{huang2018decorrelated,huang2019iterative} are proposed for convolution operations, \citet{siarohin2018whitening} is for GAN, and \citet{ermolov2021whitening} works in self-supervised learning. In contrast, we borrow the idea from contrastive learning and solve the oversmoothing of neural networks.

\section{Mitigating Oversmoothing from the Perspective of Contrastive Learning}

In this section, we first empirically demonstrate that current similarity metric fails to characterize dimensional collapse, thus overlooking a crucial part of oversmoothing. To address this problem, we draw inspiration from contrastive learning whose uniformity property naturally rules out dimensional collapse. Specifically, we transfer the uniformity loss to a loss that directly acts on representations. By unfolding the optimization steps along this loss, we induce a normalization layer called ContraNorm. Theoretically, we prove our proposed layer helps mitigate dimensional collapse.

\subsection{The Characterization of Oversmoothing}

In this part, we begin by highlighting the limitations of existing metrics in characterizing oversmoothing. These limitations motivate us to adopt the effective rank metric, which has been demonstrated to be effective in capturing the degree of dimensional collapse in contrastive learning. 

Taking the oversmoothing problem of Transformers as an example without loss of generality, a prevailing metric is evaluating attention map similarity \citep{wang2022anti, gong2021vision, shi2022revisiting}. The intuition is that as attention map similarity converges to one, feature similarity increases, which can result in a loss of representation expressiveness and decreased performance. However, by conducting experiments on transformer structured models like ViT and BERT, we find that high attention map similarity does not necessarily correspond to high feature similarity. As shown in Figure \ref{fig:bert-sim} and Figure \ref{fig:vit-sim}, although attention map similarity is close to 0.8 or even higher, the feature similarity remains below 0.5 in most cases. It means the oversmoothing problem still occurs even with a low feature similarity. This finding suggests that similarity can not fully depict the quality of representations and the oversmoothing problem. 

Intuitively, we can consider a case that representations in the latent embedding space do not shrink to one single point but span a low dimensional space. In such cases, feature similarity may be relatively low, but representations still lose expressive power. Such representation degeneration problem is known as \textit{dimensional collapse}, and it is 
widely discussed in the literature of contrastive learning. In contrastive learning, common practice to describe dimensional collapse is the vanishing distribution of singular values \citep{gao2019representation,ethayarajh2019contextual, jing2021understanding}. To investigate whether dimensional collapse occurs in Transformers, we draw the singular value distribution of features in the last block of 12-layer BERT. As shown in Figure \ref{fig:bert-singular}, the insignificant (nearly zero) values dominate singular value distribution in the deep layer of vanilla BERT, indicating that representations reside on a low-dimensional manifold and dimensional collapse happens. To show the collapse tendency along layer index, we simplify the singular value distribution to a concise scalar \textit{effective rank} (erank) \citep{roy2007effective}, which covers the full singular value spectrum. 

\begin{definition}[Effective Rank] \textit{Considering matrix $\mX \in \R^{m \times n}$ whose singular value decomposition is given by $\mX = \mU\mathbf{\Sigma}\mV$, where $\mathbf{\Sigma}$ is a diagonal matrix with singular values $\sigma_1 \geq \sigma_2 \geq \cdots \geq \sigma_Q \geq 0$ with $Q = \min\{m, n\}$. The distribution of singular values is defined as $L_1$-normalized form
$p_i = \sigma_i/\sum_{k=1}^{Q} {\vert \sigma_k \vert}.$
The effective rank of the matrix $\mX$, denoted as $\mbox{erank}(\mX)$, is defined as
$\mbox{erank}(\mX) = \exp (H\{p_1, p_2, \cdots, p_Q\}),$
where $H(p_1, p_2, \cdots, p_Q)$ is the Shannon entropy given by
$H(p_1, p_2, \cdots, p_Q) = -\sum_{k=1}^Q p_k \log p_k.$}
\end{definition}

Based on erank, we revisit the oversmoothing issue of Transformers in Figure \ref{fig:bert-erank}. We can see that the effective rank descends along with layer index, indicating an increasingly imbalanced singular value distribution in deeper layers. This finding not only verifies dimensional collapse does occur in Transformers, but also indicates the effectiveness of effective rank to detect this issue.

\begin{figure}
     \centering
     \begin{subfigure}[b]{0.245\textwidth}
         \centering
         \includegraphics[width=\textwidth]{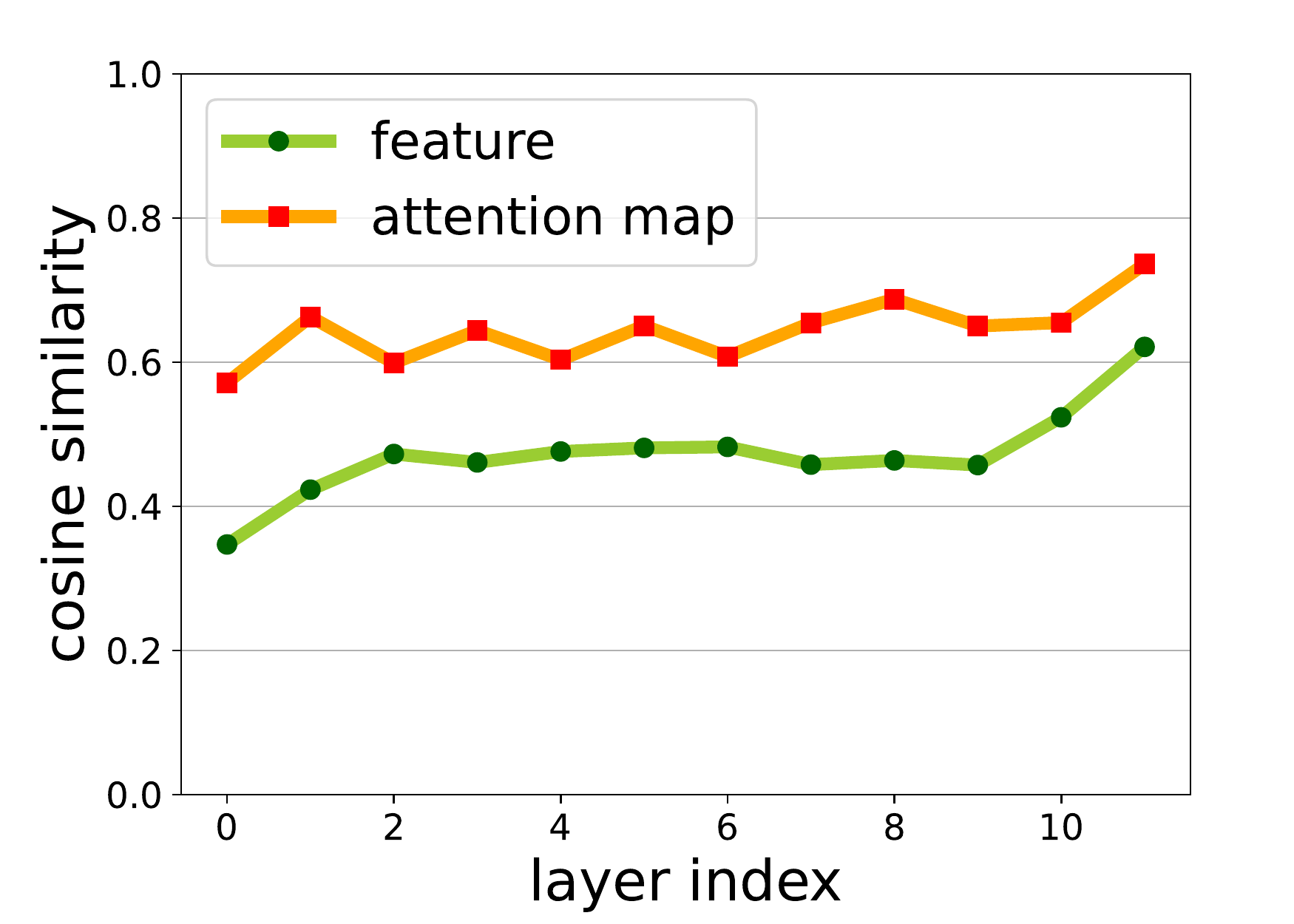}
         \caption{Similarity on BERT}
         \label{fig:bert-sim}
     \end{subfigure}
     \begin{subfigure}[b]{0.245\textwidth}
         \centering
         \includegraphics[width=\textwidth]{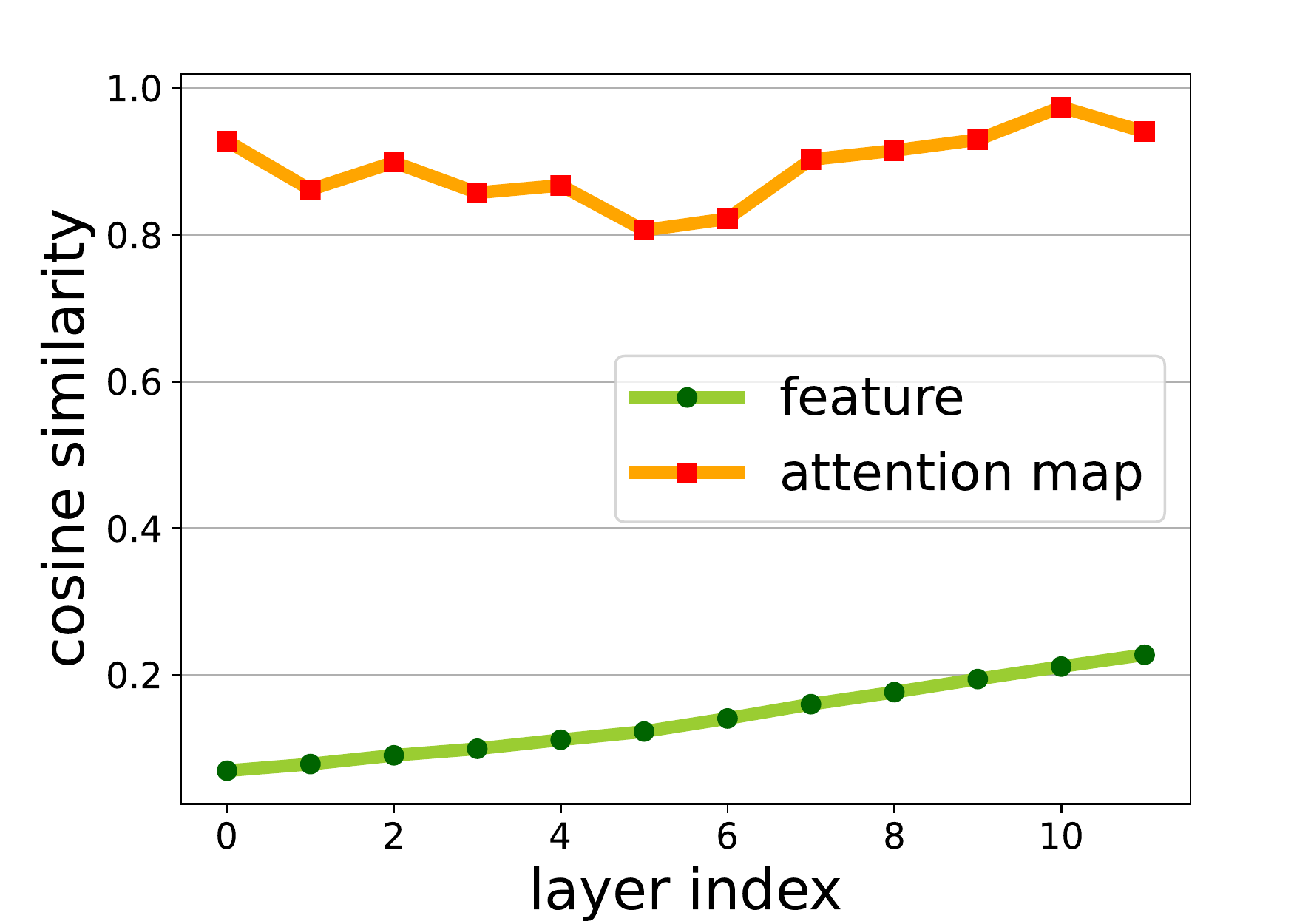}
         \caption{Similarity on ViT}
         \label{fig:vit-sim}
     \end{subfigure}
     \begin{subfigure}[b]{0.245\textwidth}
         \centering
         \includegraphics[width=\textwidth]{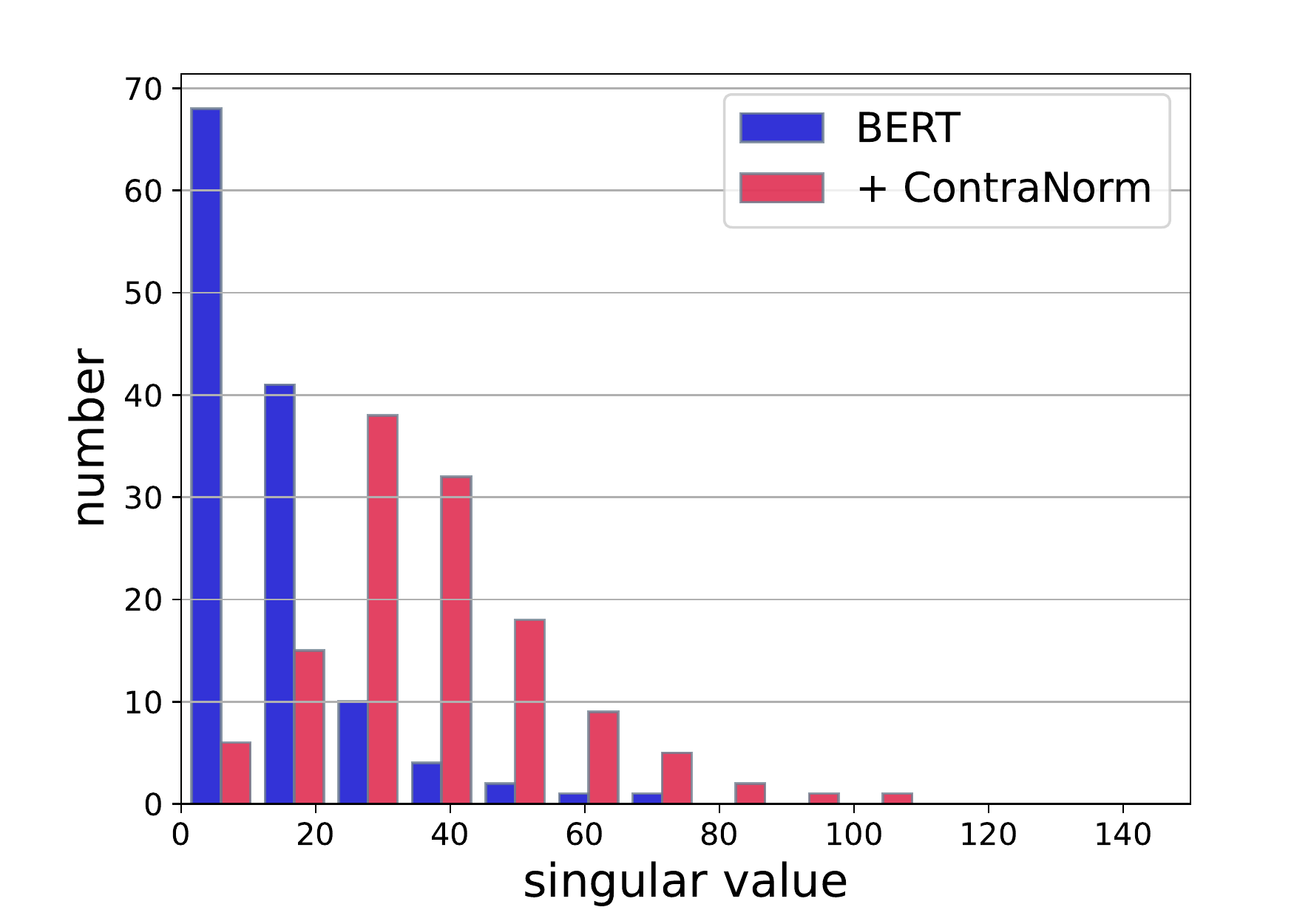}
         \caption{Feature Distribution}
         \label{fig:bert-singular}
     \end{subfigure}
     \begin{subfigure}[b]{0.245\textwidth}
         \centering
         \includegraphics[width=\textwidth]{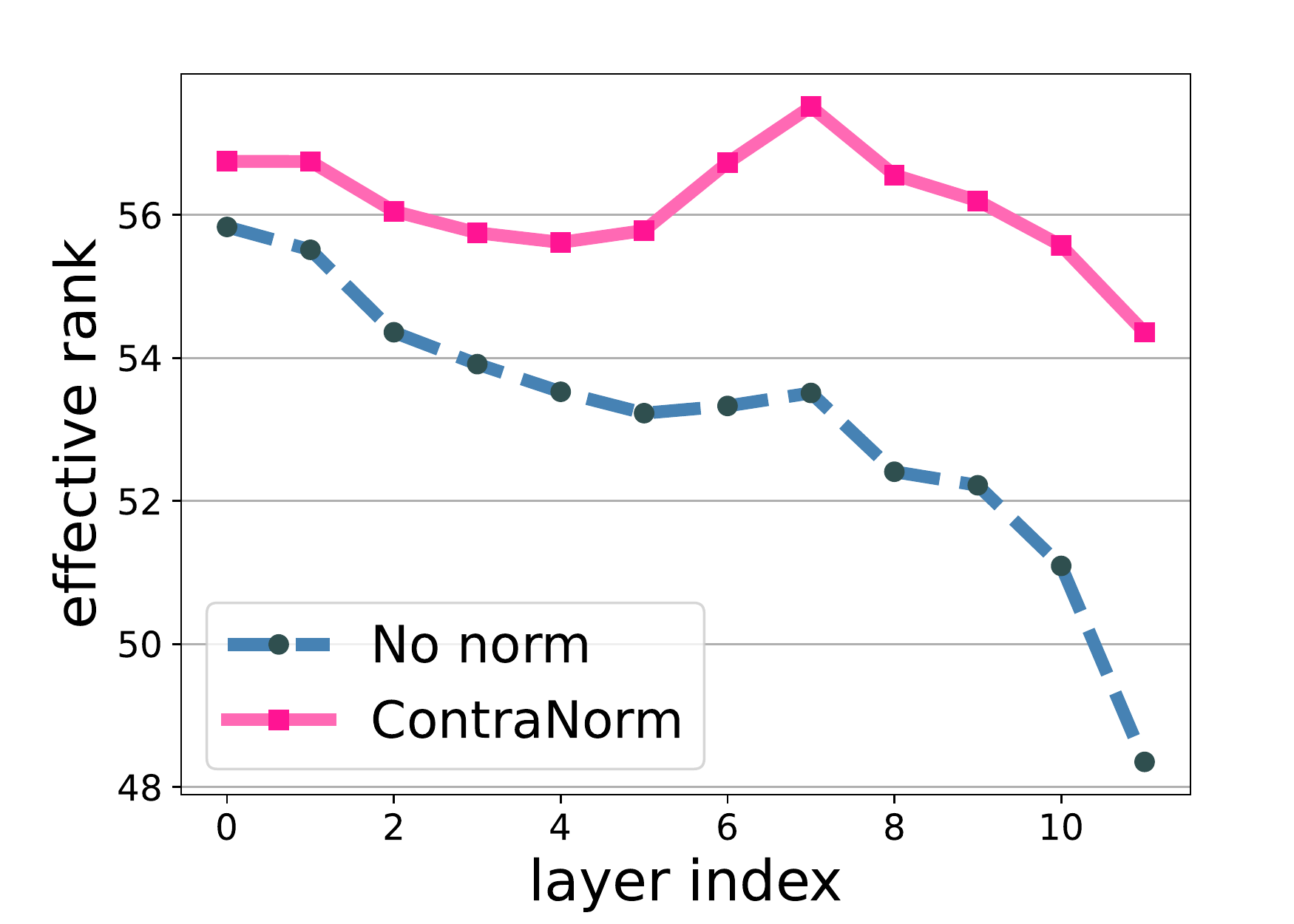}
         \caption{Effective Rank}
         \label{fig:bert-erank}
     \end{subfigure}
     \hfill
        \caption{Comparison between different metrics of oversmoothing of 12-layer BERT on GLUE sub-task STS-B and 12-layer ViT on CIFAR10. Figure \ref{fig:bert-sim} and Figure \ref{fig:vit-sim} show feature and attention map cosine similarity along layer index. Figure \ref{fig:bert-singular} and Figure \ref{fig:bert-erank} show the singular value distribution of features and effective rank of BERT w/ and w/o ContraNorm.}
        \label{fig:cossim}
        \vspace{-0.1 in}
\end{figure}

\subsection{Inspirations from the Uniformity Loss in Contrastive Learning}

The core idea of contrastive learning is maximizing agreement between augmented views of the same example (i.e., positive pairs) and disagreement of views from different samples (i.e. negative pairs). A popular form of contrastive learning optimizes feature representations using a loss function with limited negative samples \citep{chen2020simple}. Concretely, given a batch of randomly sampled examples, for each example we generate its augmented positive views and finally get a total of $N$ samples. Considering an encoder function $f$, the contrastive loss for the positive pair $(i, i^+)$ is 
\begin{equation} \label{eq:infonce}
\Ls(i, i^+) = -\log \frac{\mbox{exp}(f(\vx_i)^{\top}f(\vx_{i^+}) / \tau)}{\sum_{k=1}^{N} \mbox{exp}(f(\vx_i)^{\top}f(\vx_k) / \tau) - \mbox{exp}(f(\vx_i)^{\top}f(\vx_i) / \tau)},
\end{equation}
where $\tau$ denotes the temperature. The loss can be decoupled into \textit{alignment loss} and \textit{uniformity loss}:
\begin{equation} \label{eq:uniform}
    \Ls_{\rm{align}}(i,i^+) = - f(\vx_i)^{\top}f(\vx_{i^+})/\tau \qquad  \Ls_{\rm{uniform}}(i) = \log \sum_{k=1}^N \mbox{exp}(f(\vx_i)^{\top}f(\vx_k)/\tau).
\end{equation}
The alignment loss encourages feature representations for positive pairs to be similar, thus being invariant to unnecessary noises. However, training with only the alignment loss will result in a trivial solution where all representations are identical. In other words, complete collapse happens. While batch normalization \citep{ioffe2015batch} can help avoid this issue, it cannot fully prevent the problem of dimensional collapse, which still negatively impacts learning \citep{hua2021feature}. 

Thanks to the property of uniformity loss, dimensional collapse can be solved effectively. Reviewing the form of uniformity loss in Eq.~(\ref{eq:uniform}), it maximizes average distances between all samples, resulting in embeddings that are roughly uniformly distributed in the latent space, and thus more information is preserved. The inclusion of the uniformity loss in the training process helps to alleviate dimensional collapse. Intuitively, it can also serve as a sharp tool for alleviating oversmoothing in models such as GNNs and Transformers. 

An alternative approach to address oversmoothing is to directly incorporate the uniformity loss into the training objective. However, our experiments reveal that this method has limited effectiveness (see Appendix \ref{appen:uniformity} for more details). Instead, we propose a normalization layer that can be easily integrated into various models. Our approach utilizes the uniformity loss as an underlying energy function of the proposed layer, such that a descent step along the energy function corresponds to a forward pass of the layer. Alternatively, we can view the layer as the unfolded iterations of an optimization function. This perspective is adopted to elucidate GNNs \citep{yang2021graph, zhu2021interpreting}, Transformers \citep{yang2022transformers}, and classic MLPs \citep{xie2021optimization}. 

Note that the uniformity loss works by optimizing model parameters while what the normalization layer directly updates is representation itself. So we first transfer the uniformity loss, which serves as a training loss, to a kind of architecture loss. Consider a fully connected graph with restricted number of nodes, where node $\vh_i$ is viewed as representation of a random sample $\vx_i$. Reminiscent of $\Ls_{\rm{uniform}}$, we define $\hat{\Ls}_{\rm{uniform}}$ over all nodes in the graph as
\begin{equation}
     \hat{\Ls}_{\rm{uniform}} = \sum_i \Ls_{\rm{uniform}}(i) = \sum_i \log \sum_j e^{\vh_i^{\top}\vh_j / \tau}. \label{eq:uniform-graph}
\end{equation}
This form of uniformity loss is defined directly on representations, and we later use it as the underlying energy function for representation update.  

\subsection{The Proposed ContraNorm} \label{section:contranorm}
Till now, we are able to build a connection between layer design and the unfolded uniformity-minimizing iterations. Specifically, we take the derivative of $\hat{L}_{\rm{uniform}}$ on node representations:
\begin{equation} \label{eq-d}
    \frac{\partial \hat{\Ls}_{\rm{uniform}}}{\partial \vh_i} 
    = \sum_j \frac{\exp(\vh_i^{\top} \vh_j/\tau)}{\sum_k \exp(\vh_i^{\top} \vh_k/\tau)} \vh_j/\tau + \sum_j \frac{\exp(\vh_i ^{\top}\vh_j/\tau)}{\sum_k \exp(\vh_j ^{\top}\vh_k/\tau)} \vh_j/\tau.
\end{equation}
Denote the feature matrix as $\mH$ with the $i$-th row $\vh_i^{\top}$, we can rewrite Eq.~(\ref{eq-d}) into a matrix form:
\begin{equation} \label{eq-m}
    \frac{\partial \hat{\Ls}_{\rm{uniform}}}{\partial \mH} = (\mD^{-1}\mA + \mA\mD^{-1})\mH/\tau,
\end{equation}
where $\mA = \exp(\mH\mH^{\top}/\tau)$ and $\mD = \deg (\mA)$\footnote{$\deg(\mA)$ is a diagonal matrix, whose $(i,i)$-th element equals to the sum of the $i$-th row of $\mA$.}. To reduce the uniformity loss $\hat{\Ls}_{\rm{uniform}}$, a natural way is to take a single step of gradient descent that updates $\mH$ by
\begin{equation} \label{eq-contra}
    \mH_t = \mH_b - s\times\frac{\partial \hat{\Ls}_{\rm{uniform}}}{\partial \mH_b}=\mH_b - s/\tau \times (\mD^{-1}\mA + \mA\mD^{-1})\mH_b,
\end{equation}
where $\mH_b$ and $\mH_t$ denote the representations before and after the update, respectively, and $s$ is the step size of the gradient descent. By taking this update after a certain representation layer, we can reduce the uniformity loss of the representations and thus help ease the dimensional collapse.

In Eq.~(\ref{eq-contra}), there exist two terms $\mD^{-1}\mA$ and $\mA\mD^{-1}$ multiplied with $\mH_b$. Empirically, the two terms play a similar role in our method. Note that the first term is related to self-attention matrix in Transformers, so we only preserve it and discard the second one. Then Eq.~(\ref{eq-contra}) becomes 
\begin{equation} \label{eq-contra-1}
    \mH_t=\mH_b-s/\tau\times(\mD^{-1}\mA)\mH_b = \mH_b - s/\tau \times \text{softmax}(\mH_b \mH_b^{\top})\mH_b.
\end{equation}
In fact, the operation corresponds to the stop-gradient technique, which is widely used in contrastive learning methods \citep{He_2020_CVPR, grill2020bootstrap,tao2022exploring}. By throwing away some terms in the gradient, stop-gradient makes the training process asymmetric and thus avoids representation collapse with less computational overhead, which is discussed in detail in Appendix \ref{stop-gradient}.

However, the layer induced by Eq.~(\ref{eq-contra-1}) still can not ensure uniformity on representations. Consider an extreme case where $\softmax(\mH_b\mH_b^{\top})$ is equal to identity matrix $\mI$. Eq.~(\ref{eq-contra-1}) becomes $\mH_t = \mH_b - s/\tau \times \mH_b = (1-s/\tau)\mH_b$, which just makes the scale of $\mH$ smaller and does not help alleviate the complete collapse. To keep the representation stable, we explore two different approaches: feature norm regularization and layer normalization.

\textbf{I. Feature Norm regularization.} We add a regularization term $-\frac{1}{2}\sum_i\|h_i\|_2^2$ to the uniformity loss to encourage a larger feature norm. When the regularization term becomes smaller, the norm of $h_i$ becomes larger. Therefore, adding this term can help prevent the norm of representation $h_i$ from becoming smaller. In this way, the update form becomes
\begin{equation} \label{eq-contra-2}
    \mH_t = (1 + s) \mH_b - s/\tau \times \softmax(\mH_b\mH_b^{\top})\mH_b.
\end{equation}
\begin{proposition}
\label{proposition:complete}
    Let $\ve=(1,1,\dots,1)^{\top}/\sqrt{n}$. For attention matrix $\bar{\mA}=\operatorname{softmax}(\mH_b\mH_b^{\top})$, let $\sigma_{\min}$ be the smallest eigenvalue of  $\mP=(\mI-\ve\ve^{\top})(\mI-\bar{\mA})+(\mI-\bar{\mA})^{\top}(\mI-\ve\ve^{\top})$. For the update in Eq.~(\ref{eq-contra-2}), i.e. $\mH_t=((1+s)\mI-s\bar{\mA})\mH_b,\ s>0$, we have $Var(\mH_t)\geq (1+s\sigma_{\min})\cdot Var(\mH_b)$. Especially, if $\sigma_{\min}\geq0$, we have $Var(\mH_t)\geq Var(\mH_b)$.
\end{proposition}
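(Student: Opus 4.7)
The plan is to rewrite the variance in projection form, $\operatorname{Var}(\mH) = \tfrac{1}{n}\|\mC\mH\|_F^2$, where $\mC := \mI - \ve\ve^\top$ is the symmetric idempotent centering matrix. The claim then reduces to a purely algebraic comparison between $\|\mC\mH_t\|_F^2$ and $\|\mC\mH_b\|_F^2$ for the update $\mH_t = \mH_b + s(\mI - \bar{\mA})\mH_b$. The strategy is to isolate the symmetric matrix $\mP$ from the cross term of a Frobenius-norm expansion and then invoke a Rayleigh bound.

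First, I would exploit the row-stochasticity of $\bar{\mA} = \operatorname{softmax}(\mH_b\mH_b^\top)$: since each row of $\bar{\mA}$ sums to one, $\bar{\mA}\vone = \vone$ and therefore $(\mI - \bar{\mA})\ve = 0$. Decomposing $\mH_b = \mC\mH_b + \ve\ve^\top \mH_b$ and setting $\mY := \mC\mH_b$, this lets me replace $\mH_b$ by $\mY$ under $\mI - \bar{\mA}$, giving
\begin{equation*}
\mC\mH_t \;=\; \mY + s\,\mC(\mI - \bar{\mA})\mY.
\end{equation*}
Expanding the Frobenius norm and discarding the nonnegative $s^2$ term leaves
\begin{equation*}
\|\mC\mH_t\|_F^2 \;\geq\; \|\mY\|_F^2 + 2s\,\operatorname{tr}\!\bigl(\mY^\top \mC(\mI-\bar{\mA})\mY\bigr).
\end{equation*}
Since a scalar trace equals its transpose and $\mC^\top = \mC$, the cross term can be symmetrized into $\operatorname{tr}(\mY^\top \mP \mY)$ with $\mP$ exactly as defined in the statement; this is the step that makes $\mP$ arise naturally.

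The final step is the Rayleigh bound. Symmetry of $\mP$ is immediate from its definition, so applying $\vy^\top \mP \vy \geq \sigma_{\min}\|\vy\|_2^2$ column by column yields $\operatorname{tr}(\mY^\top \mP \mY) \geq \sigma_{\min}\|\mY\|_F^2$; combining with the previous display and dividing by $n$ gives $\operatorname{Var}(\mH_t) \geq (1 + s\sigma_{\min})\operatorname{Var}(\mH_b)$, and the sub-case $\sigma_{\min}\geq 0$ is then immediate. I expect the main obstacle to be the reduction $\mH_b \mapsto \mY$ inside $(\mI-\bar{\mA})\mH_b$: without this use of row-stochasticity, the cross term would only produce $\operatorname{tr}(\mH_b^\top \mP \mH_b)$, which does not align cleanly with $\operatorname{Var}(\mH_b) = \tfrac{1}{n}\|\mY\|_F^2$ on the right-hand side, so the Rayleigh bound would fall short. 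Once this reduction is in place, the symmetrization of the trace and the eigenvalue bound are routine.
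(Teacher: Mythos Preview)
Your argument is correct. The expansion $\mC\mH_t=\mY+s\,\mC(\mI-\bar{\mA})\mY$ (via row-stochasticity of $\bar{\mA}$), dropping the nonnegative $s^2$ term, symmetrizing the cross term into $\operatorname{tr}(\mY^\top\mP\mY)$, and then applying the Rayleigh bound columnwise all go through exactly as you describe, and yield precisely the stated inequality $\operatorname{Var}(\mH_t)\geq(1+s\sigma_{\min})\operatorname{Var}(\mH_b)$.

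Your route is genuinely different from the paper's. The paper does not expand the Frobenius norm directly; instead it proves a general lemma that $\operatorname{Var}(\mB\mX)\geq\lambda^{-1}\operatorname{Var}(\mX)$ whenever the symmetric matrix $(\mI-\ve\ve^\top)-\lambda\,\mB^\top(\mI-\ve\ve^\top)\mB$ is negative semidefinite, and then specializes to $\mB=(1+s)\mI-s\bar{\mA}$. It expands this difference matrix to isolate $\mP$ and a nonnegative $s^2$ block, then applies Weyl's inequality to bound the top eigenvalue and chooses $\lambda$ accordingly. This yields the sharper multiplicative factor $(1-s\sigma_{\min})^{-1}$ rather than $1+s\sigma_{\min}$, so the paper's machinery buys a slightly stronger conclusion (and a reusable lemma), at the price of an eigenvalue comparison via Weyl. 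Your approach is more elementary and self-contained: it uses nothing beyond idempotence of $\mC$, row-stochasticity of $\bar{\mA}$, and the Rayleigh quotient, and it lands exactly on the bound as stated. A minor remark: the paper takes $\operatorname{Var}(\mH)=\|(\mI-\ve\ve^\top)\mH\|_F^2$ without the $1/n$ factor, but this is immaterial since it cancels in the ratio.
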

Proposition \ref{proposition:complete} gives a bound for the ratio of the variance after and before the update. It shows that the change of the variance is influenced by the symmetric matrix $\mP=(\mI-\ve\ve^{\top})(\mI-\bar{\mA})+(\mI-\bar{\mA})^{\top}(\mI-\ve\ve^{\top})$. If $\mP$ is a semi-positive definite matrix, we will get the result that $Var(\mH_t)\geq Var(\mH_b)$, which indicates that the representations become more uniform. In Appendix \ref{appen:proof}, we will give out some sufficient conditions for that $\mP$ is semi-positive definite.

\textbf{II. Appending LayerNorm.} Another option is to leverage Layer Normalization (LayerNorm) \citep{ba2016layer} to maintain the feature scale. The update form of LayerNorm is $\text{LayerNorm}(\vh_i)=\gamma\cdot((\vh_i-\text{mean}(\vh_i))/\sqrt{\text{Var}(\vh_i)+\varepsilon})+\beta$, where $\gamma$ and $\beta$ are learnable parameters and $\varepsilon=10^{-5}$. The learnable parameters $\gamma$ and $\beta$ can rescale the representation $\vh_i$ to help ease the problem. We append LayerNorm to the original update in Eq.~(\ref{eq-contra-1}) and obtain
\begin{equation} \label{eq-contra-3}
    \mH_t = \mbox{LayerNorm}\left(\mH_b - s/\tau \times \softmax(\mH_b\mH_b^{\top})\mH_b\right),
\end{equation}
where applying the layer normalization to a representation matrix $\mH$ means applying the layer normalization to all its components $\vh_1,\dots,\vh_n$. 

\textbf{ContraNorm.} We empirically compare the two proposed methods and find their performance comparable, while the second one performs slightly better. Therefore, we adopt the second update form and name it Contrastive Normalization (\textbf{ContraNorm}). The ContraNorm layer can be added after any representation layer to reduce the uniformity loss and help relieve the dimensional collapse. We discuss the best place to plug our ContraNorm in Appendix \ref{appen:pos}. 

\textbf{ContraNorm-D: Scalable ContraNorm for Large-scale Graphs.}
From the update rule in Eq.~(\ref{eq-contra-3}), we can see that the computational complexity of the ContraNorm layer is $\gO(n^2d)$, where $n$ is the number of tokens and $d$ is the feature dimension per token, which is in the same order of self-attention \citep{vaswani2017attention}. Therefore, this operation is preferable when the token number $n$ is similar or less than the feature dimension $d$, as is usually the case in CV and NLP domains. However, for large-scale graphs that contain millions of nodes ($n\gg d$), the quadratic dependence on $n$ makes ContraNorm computationally prohibitive. To circumvent this issue, we propose Dual ContraNorm (\textbf{ContraNorm-D}), a scalable variant of ContraNorm whose computational complexity $\gO(nd^2)$ has a \textit{linear} dependence on $n$, with the following dual update rule,
\begin{equation} \label{eq-contra-4}
    \mH_t = \mbox{LayerNorm}\left(\mH_b - s/\tau \times \mH_b \times \softmax(\mH_b^{\top}\mH_b)\right),
\end{equation}
where we calculate the dimension-wise feature correlation matrix $\mH_b^{\top}\mH_b\in\sR^{d\times d}$ and multiple it to the right of features $\mH_b$ after softmax normalization. In Appendix \ref{appen:time}, we provide a more thorough explanation of it, showing how to derive it from the feature decorrelation normalization adopted in non-contrastive learning methods (like Barlow Twins \citep{zbontar2021barlow} and VICReg \citep{bardes2021vicreg}). As revealed in recent work \citep{garrido2022duality}, there is a dual relationship between contrastive and non-contrastive learning methods, and we can therefore regard ContraNorm-D as a dual version of ContraNorm that focuses on decreasing dimension-wise feature correlation.

\subsection{Theoretical Analysis}
\label{sec:theoretical-analysis}

In this part, we give a theoretical result of ContraNorm, and show that with a slightly different form of the uniformity loss, the ContraNorm update can help alleviate the dimensional collapse. 

\begin{proposition}
    Consider the update form $
    \mH_t=(1+s)\mH_b-s(\mH_b\mH_b^{\top})\mH_b,$
    let $\sigma_{\max}$ be the largest singular value of $\mH_b$. For $s>0$ satisfying $1+(1-\sigma^2_{\max})s>0$, we have $\operatorname{erank}(\mH_t)>\operatorname{erank}(\mH_b)$.\label{proposition:dimensional}
\end{proposition}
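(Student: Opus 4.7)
}
The plan is to reduce everything to a statement about the singular values of $\mH_b$, show that the update acts as a diagonal reweighting on those singular values, and then argue via majorization that the induced $L^1$-normalized distribution becomes flatter, so that its Shannon entropy (and hence the effective rank) strictly increases.

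\textbf{Step 1: Reduce to singular values.} Take a (thin) SVD $\mH_b=\mU\mSigma\mV^{\top}$ with $\mSigma=\operatorname{diag}(\sigma_1,\dots,\sigma_Q)$, $\sigma_1\geq\dots\geq\sigma_Q\geq 0$. Substituting into $\mH_t=((1+s)\mI-s\mH_b\mH_b^{\top})\mH_b$ and using $\mH_b\mH_b^{\top}=\mU\mSigma^2\mU^{\top}$, a direct computation gives
\begin{equation*}
\mH_t=\mU\,\mSigma\bigl((1+s)\mI-s\mSigma^2\bigr)\mV^{\top},
\end{equation*}
so $\mH_t$ has singular values $\sigma_i'=\sigma_i\,f(\sigma_i)$ with $f(\sigma):=1+s-s\sigma^2$. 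The stated condition $1+(1-\sigma_{\max}^2)s>0$ exactly guarantees $f(\sigma_i)>0$ for every $i$ (since $f$ is decreasing and $\sigma_i\leq\sigma_{\max}$), so the $\sigma_i'$ are the true, nonnegative singular values with no sign flips or reshuffling of the ordering.

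\textbf{Step 2: The normalized spectrum becomes more uniform.} Let $p_i=\sigma_i/\sum_k\sigma_k$ and $p_i'=\sigma_i'/\sum_k\sigma_k'$ be the two distributions entering the definition of $\operatorname{erank}$. I would prove that $p'$ is majorized by $p$ by checking, for every $k<Q$, the inequality $\sum_{i\leq k}p_i\geq\sum_{i\leq k}p_i'$. Clearing denominators, the sign of this difference equals the sign of
\begin{equation*}
\sum_{i\leq k,\;j}\sigma_i\sigma_j\bigl(f(\sigma_j)-f(\sigma_i)\bigr).
\end{equation*}
The contribution from $j\leq k$ is antisymmetric under swapping $i\leftrightarrow j$ and vanishes, while the remaining sum over $i\leq k<j$ has every term nonnegative because $\sigma_i\geq\sigma_j$ and $f$ is strictly decreasing on $[0,\infty)$ for $s>0$. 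Thus $p'\prec p$.

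\textbf{Step 3: Conclude via Schur-concavity.} Shannon entropy is strictly Schur-concave on the probability simplex, so $p'\prec p$ with $p'\neq p$ yields $H(p')>H(p)$, hence $\operatorname{erank}(\mH_t)=\exp H(p')>\exp H(p)=\operatorname{erank}(\mH_b)$. Strict inequality $p'\neq p$ holds whenever the nonzero singular values of $\mH_b$ are not all equal, which is the only interesting case (if all nonzero singular values coincide there is no dimensional collapse to alleviate and both effective ranks equal the rank of $\mH_b$); this degenerate case should be flagged explicitly in the write-up.

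\textbf{Anticipated obstacle.} The routine part is the SVD manipulation in Step~1. The substantive step is Step~2: establishing majorization from a pointwise ``ratios contract toward $1$'' property. The clean double-sum/antisymmetrization trick above is the key insight; alternative routes (T-transforms, or differentiating $H(p(s))$ in $s$ and signing the derivative) are messier and I would only fall back on them if the antisymmetry argument runs into trouble with ties among the $\sigma_i$. Care is also needed to state the result precisely: strict monotonicity of $\operatorname{erank}$ requires at least two distinct positive singular values, which should be recorded as the nontriviality hypothesis.
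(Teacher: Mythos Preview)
Your proposal is correct and follows essentially the same route as the paper: both compute that the update acts diagonally in the SVD of $\mH_b$, giving new singular values $\sigma_i'=\sigma_i\bigl((1+s)-s\sigma_i^2\bigr)$, and both then invoke a ``reweighting by factors increasing in $i$'' principle to conclude that the normalized spectrum flattens and the effective rank goes up. The paper packages that principle as an unproved auxiliary lemma (if the ratios of the new to old eigenvalues are increasing in $i$ then $\operatorname{erank}$ does not decrease), whereas you spell it out as a majorization $p'\prec p$ via the antisymmetrization trick and then appeal to Schur-concavity of entropy; these are the same argument at different levels of detail. Your treatment is in fact slightly more careful than the paper's on one point: the paper's lemma only yields $\operatorname{erank}(\mH_t)\geq\operatorname{erank}(\mH_b)$, and you correctly isolate the nondegeneracy hypothesis (at least two distinct positive singular values) needed for the strict inequality the proposition claims.
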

Proposition \ref{proposition:dimensional} gives a promise of alleviating dimensional collapse under a special update form. 
Denote $\hat{\Ls}_{\rm{dim}}=tr((\mI-\mH\mH^{\top})^2)/4$, then $\hat{\Ls}_{\rm{dim}}$ shares a similar form with Barlow Twins \citep{zbontar2021barlow}. This loss tries to equate the diagonal elements of the similarity matrix $\mH\mH^{\top}$ to 1 and equate the off-diagonal elements of the matrix to 0, which drives different features becoming orthogonal, thus helping the features become more uniform in the space. Note that $\frac{\partial \hat{\Ls}_{\rm{dim}}}{\partial \mH}=(\mI-\mH\mH^{\top})\mH$, therefore, the update above can be rewritten as $\mH_t=\mH_b+s\frac{\partial\hat{\Ls}_{\rm{dim}}(\mH_b)}{\partial \mH_b}$, which implies that this update form is a ContraNorm with a different uniformity loss. Proposition \ref{proposition:dimensional} reveals that this update can increase the effective rank of the representation matrix, when $s$ satisfies $1+(1-\sigma_{\max})s>0$. Note that no matter whether $\sigma_{\max}>0$ or not, if $s$ is sufficiently close to $0$, the condition will be satisfied. Under this situation, the update will alleviate the dimensional collapse.

\section{Experiments}
In this section, we demonstrate the effectiveness of ContraNorm by experiments including 1) language understanding tasks on GLUE datasets with BERT and ALBERT \citep{lan2019albert} as the backbones; 2) image classification task on ImageNet100 and ImageNet1k datasets with DeiT as the base model; 3) fully supervised graph node classification task on popular graph datasets with GCN as the backbone. Moreover, we conduct ablative studies on ContraNorm variants comparison and hyperparameters sensitivity analysis.

\subsection{Experiments on Language Models} \label{sec:nlp-task}
\textbf{Setup.} 
To corroborate the potential of ContraNorm, we integrate it into two transformer structured models: BERT and ALBERT, and evaluate it on GLUE datasets. GLUE includes three categories of tasks:  (i) single-sentence tasks CoLA and SST-2; (ii) similarity and paraphrase tasks MRPC, QQP, and STS-B; (iii) inference tasks MNLI, QNLI, and RTE. For MNLI task, we experiment on both the matched (MNLI-m) and mismatched (MNLI-mm) versions. We default plug ContraNorm after the self-attention module and residual connection (more position choices are in Appendix \ref{appen:pos}). We use a batch size of 32 and fine-tune for 5 epochs over the data for all GLUE tasks. For each task, we select the best scale factor $s$ in Eq.~(\ref{eq-contra}) among $(0.005, 0.01, 0.05, 0.1, 0.2)$. We use base models (BERT-base and ALBERT-base) of 12 stacked blocks with hyperparameters fixed for all tasks: number of hidden size 128, number of attention heads 12, maximum sequence length 384. We use Adam \citep{kingma2014adam} optimizer with the learning rate of $2e-5$. 

\textbf{Results.} As shown in Table \ref{table:nlp-dev}, ContraNorm substantially improves results on all datasets compared with vanilla BERT. Specifically, our ContraNorm improves the previous average performance from 82.59\% to 83.39\% for BERT backbone and from 83.74\% to 84.67\% for ALBERT backbone. We also submit our trained model to GLUE benchmark leaderboard and the results can be found in Appendix \ref{appen:test}. It is observed that BERT with ContraNorm also outperforms vanilla model across all datasets. To verify the de-oversmoothing effect of ContraNorm. We build models with/without ContraNorm on various layer depth settings. The performance comparison is shown in Figure \ref{fig:cola-acc}. Constant stack of blocks causes obvious deterioration in vanilla BERT, while BERT with ContraNorm maintains competitive advantage. Moreover, for deep models, we also show the tendency of variance and effective rank in Figure \ref{fig:cola-var} and Figure \ref{fig:cola-effect}, which verifies the power of ContraNorm in alleviating complete collapse and dimensional collapse, respectively.

\begin{figure}
     \centering
     \begin{subfigure}[b]{0.3\textwidth}
         \centering
         \includegraphics[width=\textwidth]{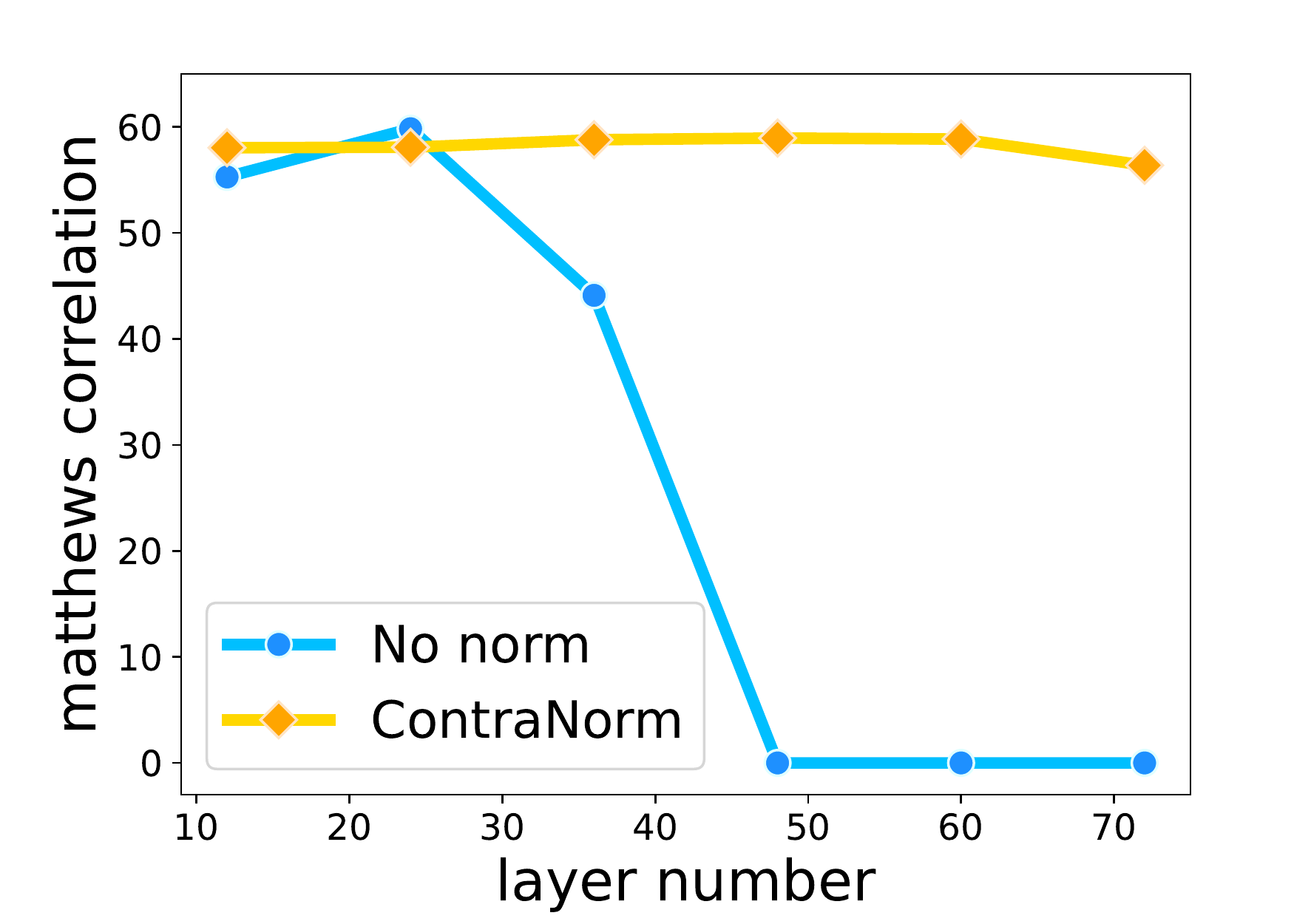}
         \caption{Deep Model Performance}
         \label{fig:cola-acc}
     \end{subfigure}
     \begin{subfigure}[b]{0.3\textwidth}
         \centering
         \includegraphics[width=\textwidth]{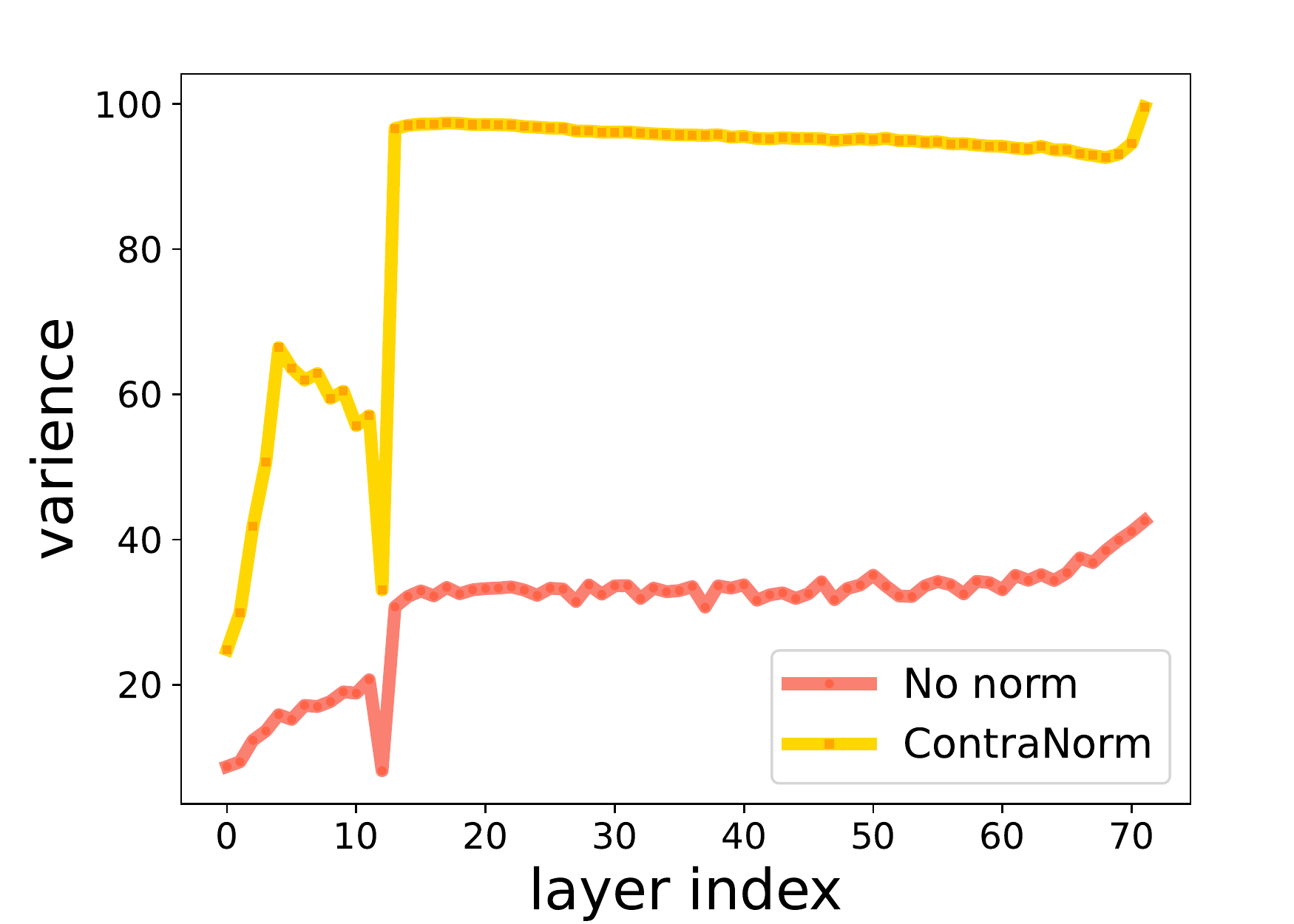}
         \caption{Variance}
         \label{fig:cola-var}
     \end{subfigure}
     \begin{subfigure}[b]{0.3\textwidth}
         \centering
         \includegraphics[width=\textwidth]{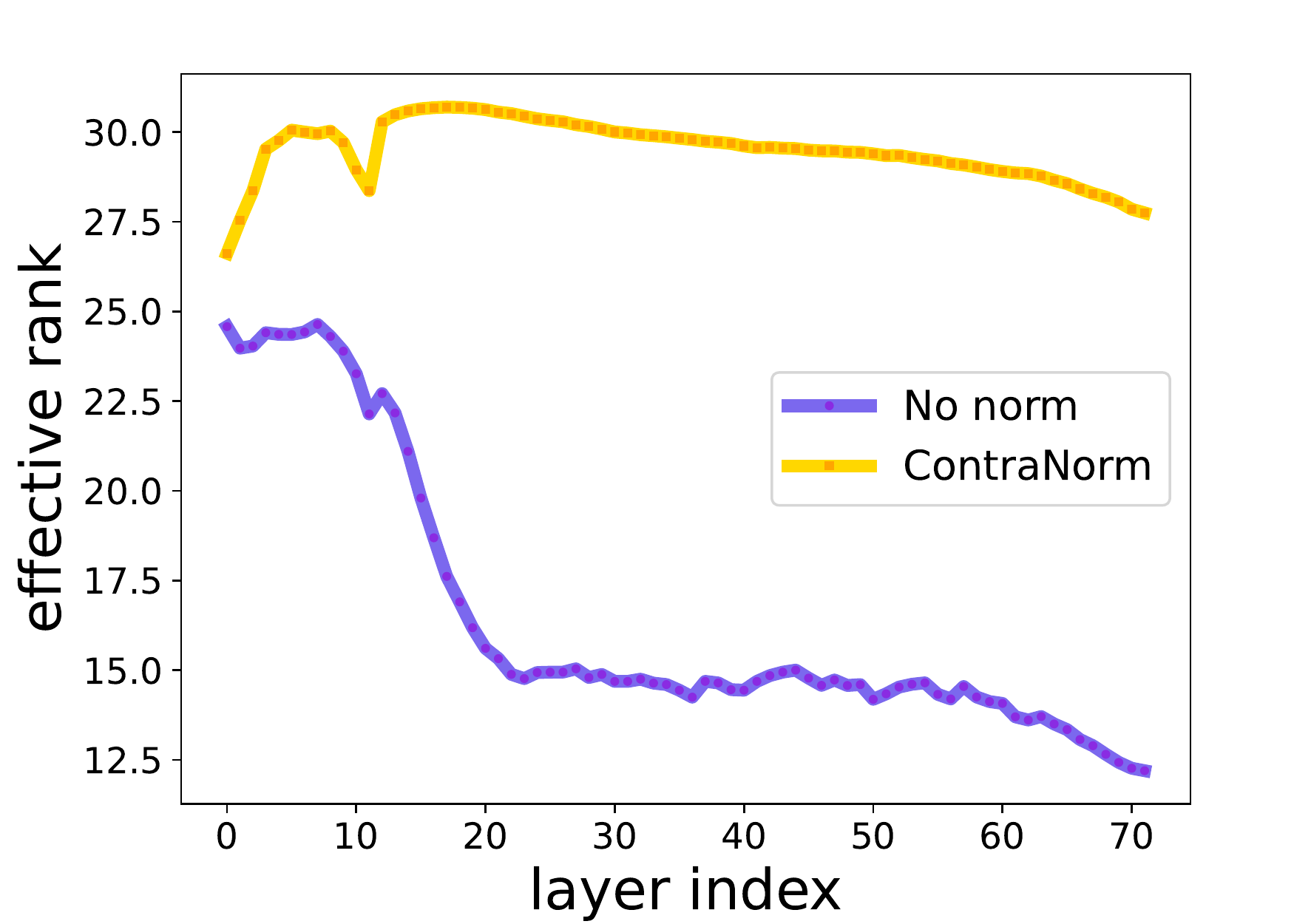}
         \caption{Effective rank}
         \label{fig:cola-effect}
     \end{subfigure}
     \hfill
     \vspace{-0.1 in}
    \caption{Comparison of BERT and BERT+ContraNorm on COLA with various layer depths. Figure \ref{fig:cola-acc} presents performances of models with block number=12, 24, 36, 48, 60, 72. Figure \ref{fig:cola-var} and Figure \ref{fig:cola-effect} show the tendency of features variance and effective rank for models with 72 blocks.}
    \label{fig:deep-cola}
    \vspace{-0.1 in}
\end{figure}

\begin{table}[!t]
	\centering
	\caption{Results comparison on validation set of GLUE tasks. Following \cite{devlin2018bert}, we report F1 scores for QQP and MRPC, Spearman correlations for STS-B, and accuracy scores for the other tasks. ContraNorm* denotes the best model when varying plugging positions of ContraNorm. \textbf{Avg} denotes the average performance on all the tasks and bold denotes the best performance.} 
	\vspace{-0.13 in}
	\resizebox{\textwidth}{!}{
	\begin{tabular}{l cccccccccc}
		\toprule
		\textbf{Dataset} & COLA & SST-2 & MRPC & QQP & STS-B & MNLI-m & MNLI-mm & QNLI & RTE & \textbf{Avg}\\
		\midrule
		BERT-base & 55.28 & 92.89 & 88.96 & 88.24 & 88.61 & 84.65 & 84.61 & 91.51 & 68.59 & 82.59 \\
		BERT + ContraNorm & 58.83 & 93.12 & 89.49 & \bf88.30 & 88.66 & \bf84.87 & 84.66 & \bf91.78 & \bf70.76 & 83.39 \\
		BERT + ContraNorm* & \bf59.57 & \bf93.23 & \bf89.97 & \bf88.30 & \bf88.93 & \bf84.87 & \bf84.67 & \bf91.78 & \bf70.76 & \bf83.54 \\
		\midrule
		ALBERT-base &  57.35 & \bf93.69 & 92.09 & 87.23 & 90.54 & 84.56 & 84.37 & 90.90 & 76.53 & 83.74\\
		ALBERT + ContraNorm &  58.51 & 92.89 & 92.86 & 87.45 & 90.56 & 84.33 & 84.62 & 91.76 & \bf79.06 & 84.67\\
		ALBERT + ContraNorm* &  \bf58.76 & 93.23 & \bf92.89 & \bf87.67 & \bf90.72 & \bf84.69 & \bf84.95 & \bf92.28  & \bf79.06 & \bf84.92\\
		\bottomrule    
	\end{tabular}
	}
	\label{table:nlp-dev}
 \vspace{-0.1 in}
\end{table}

\subsection{Experiments on Vision Transformers}

We also validate effectiveness of ContraNorm in computer vision field. We choose DeiT as backbone and models are trained from scratch. Experiments with different depth settings are evaluated on ImageNet100 and ImageNet1k datasets. Based on the Timm \citep{rw2019timm} and DeiT repositories, we insert ContraNorm into base model intermediately following self-attention module.

\textbf{Setup.} 
We follow the training recipe of \citet{touvron2021training} with minimal hyperparameter changes. Specifically, we use AdamW \citep{loshchilov2017decoupled} optimizer with cosine learning rate decay. We train each model for 150 epochs and the batch size is set to 1024. Augmentation techniques are used to boost the performance. For all experiments, the image size is set to be $224 \times 224$. For Imagenet100, we set the scale factor $s$ to 0.2 for all layer depth settings. For Imagenet1k, we set $s$ to 0.2 for models with 12 and 16 blocks, and raise it to 0.3 for models with 24 blocks.

\textbf{Results.} In Table \ref{table:vit-acc}, DeiT with ContraNorm outperforms vanilla DeiT with all layer settings on both datasets. Typically, our method shows a gain of nearly 5\% on test accuracy for ImageNet100. For ImageNet1k, we boost the performance of DeiT with 24 blocks from 77.69\% to 78.67\%.

\begin{table}[!t]
	\centering
	\caption{Test accuracy (\%) comparison results. For ImageNet100 and ImageNet1k, we use DeiT-tiny and DeiT-small as the baseline respectively. The block number is set to 12, 16 and 24. The best result for each dataset is bolded.}
	\vspace{-0.13 in}
	\begin{tabular}{ll ccc}
		\toprule
		Dataset & Model & $\#$L=12 & $\#$L=16 & $\#$L=24 \\
		\midrule
		\multirow{2}{*}{ImageNet100} & DeiT-tiny & 76.58 & 75.34 & 76.76 \\
		~ & +ContraNorm  & \bf79.34 & \bf80.44 & \bf81.28 \\
		\midrule
		\multirow{2}{*}{ImageNet1k} & DeiT-small & 77.32 & 78.25 & 77.69 \\
		~ & +ContraNorm & \bf77.80 & \bf79.04 & \bf78.67 \\
		\bottomrule    
	\end{tabular}
	\label{table:vit-acc}
 \vspace{-0.1 in}
\end{table}

\subsection{Experiments on Graph Neural Networks}

\textbf{Setup.} We implement ContraNorm as a simple normalization layer after each graph convolution of GCN, and evaluate it on fully supervised graph node classification task. For datasets, we choose two popular citation networks Cora \citep{mccallum2000automating} and Citeseer \citep{giles1998citeseer}, and Wikipedia article networks Chameleon and Squirrel \citep{rozemberczki2021multi}. More information of datasets is deferred to Appendix \ref{appen:data}. We compare ContraNorm against a popular normalization layer PairNorm \citep{zhao2019pairnorm} designed for preventing oversmoothing. We also take LayerNorm as a baseline by setting the scale $s=0$.  We follow data split setting in \citet{kipf2016semi} with train/validation/test splits of 60\%, 20\%, 20\%, respectively. To keep fair in comparison, we fix the hidden dimension to 32, and dropout rate to 0.6 as reported in \citet{zhao2019pairnorm}. We choose the best of scale controller $s$ in range of $\{0.2, 0.5, 0.8, 1.0\}$ for both PairNorm and ContraNorm. For PairNorm, we choose the best variant presented by \citet{zhao2019pairnorm}.

\textbf{Results.} As shown in Table \ref{table:gnn-acc}, in shallow layers (e.g., two layer), the addition of ContraNorm reduces the accuracy of vanilla GCN by a small margin, while it helps prevent the performance from sharply deteriorating as the layer goes deeper. ContraNorm outperforms PairNorm and LayerNorm in the power of de-oversmoothing. Here, we show the staple in diluting oversmoothing is ContraNorm, and LayerNorm alone fails to prevent GCN from oversmoothing, even amplifying the negative effect on Cora with more than 16 layers.

\begin{table}[!t]
	\centering
	\caption{Test accuracy (\%) comparison results. We use GCN as backbone and apply LayerNorm, PairNorm and ContraNorm, respectively. We fairly tune the scale parameter in LayerNorm and ContraNorm. The layer number is set to 2, 4, 8, 16, 32. For every layer setting, the best accuracy is marked in blue background, and the second best is underlined. Results are averaged over 5 runs.}
	\vspace{-0.13 in}
	\resizebox{\textwidth}{!}{
	\begin{tabular}{ll cc ccc}
		\toprule
		Dataset & Model & $\#$L=2 & $\#$L=4 & $\#$L=8 & $\#$L=16 & $\#$L=32 \\
		\midrule
		\multirow{4}{*}{Cora} & Vanilla GCN & \cellcolor{blue!15}81.75 $\pm$ 0.51 & 72.61 $\pm$ 2.42 & 17.71 $\pm$ 6.89 & 20.71 $\pm$ 8.54 & 19.69 $\pm$ 9.54 \\
		~ & +LayerNorm  & \underline{79.96 $\pm$ 0.73} & \cellcolor{blue!15}77.45 $\pm$ 0.67 & 39.09 $\pm$ 4.68 & 7.79 $\pm$ 0.00 & 7.79 $\pm$ 0.00 \\
		~ & +PairNorm  & 75.32 $\pm$ 1.05 & 72.64 $\pm$ 2.67 & \underline{71.86 $\pm$ 3.31} & \underline{54.11 $\pm$ 9.49} & \underline{36.62 $\pm$ 2.73} \\
		~ & +ContraNorm  & 79.75 $\pm$ 0.33 & \underline{77.02 $\pm$ 0.96} & \cellcolor{blue!15}74.01 $\pm$ 0.64 & \cellcolor{blue!15}68.75 $\pm$ 2.10 & \cellcolor{blue!15}46.39 $\pm$ 2.46 \\
		\midrule
		\multirow{4}{*}{CiteSeer} & Vanilla GCN & \cellcolor{blue!15}69.18 $\pm$ 0.34 & 55.01 $\pm$ 4.36 & 19.65 $\pm$ 0.00 & 19.65 $\pm$ 0.00 & 19.65 $\pm$ 0.00 \\
		~ & +LayerNorm & 63.27 $\pm$ 1.15 & \cellcolor{blue!15}60.91 $\pm$ 0.76 & 33.74 $\pm$ 6.15 & 19.65 $\pm$ 0.00 & 19.65 $\pm$ 0.00 \\
		~ & +PairNorm & 61.59 $\pm$ 1.35 & 53.01 $\pm$ 2.19 & \underline{55.76 $\pm$ 4.45} & \underline{44.21 $\pm$ 1.73} & \underline{36.68 $\pm$ 2.55} \\
		~ & +ContraNorm & \underline{64.06 $\pm$ 0.85} & \underline{60.55 $\pm$ 0.72} &  \cellcolor{blue!15}59.30 $\pm$ 0.67 &  \cellcolor{blue!15}49.01 $\pm$ 3.49 & \cellcolor{blue!15}36.94 $\pm$ 1.70 \\
	    \midrule
		\multirow{4}{*}{Chameleon} & Vanilla GCN & 45.79 $\pm$ 1.20 & 37.85 $\pm$ 1.35 & 22.37 $\pm$ 0.00 & 22.37 $\pm$ 0.00 & 23.37 $\pm$ 0.00 \\
		~ & +LayerNorm  & \underline{63.95 $\pm$ 1.29} & 55.79 $\pm$ 1.25 & 34.08 $\pm$ 2.62 & 22.37 $\pm$ 0.00 & 22.37 $\pm$ 0.00 \\
		~ & +PairNorm & 62.24 $\pm$ 1.73 & \underline{58.38 $\pm$ 1.48} & \cellcolor{blue!15}49.12 $\pm$ 2.32 & \underline{37.54 $\pm$ 1.70} & \underline{30.66 $\pm$ 1.58} \\
		~ & +ContraNorm  & \cellcolor{blue!15}64.78 $\pm$ 1.68 & \cellcolor{blue!15}58.73 $\pm$ 1.12 & \underline{48.99 $\pm$ 1.52} & \cellcolor{blue!15}40.92 $\pm$ 1.74 & \cellcolor{blue!15}35.44 $\pm$ 3.16 \\
		\midrule
		\multirow{4}{*}{Squirrel} & Vanilla GCN & 29.47 $\pm$ 0.96 & 19.31 $\pm$ 0.00 & 19.31 $\pm$ 0.00 & 19.31 $\pm$ 0.00 & 19.31 $\pm$ 0.00 \\
		~ & +LayerNorm  & 43.04 $\pm$ 1.25 & 29.64 $\pm$ 5.50 & 19.63 $\pm$ 0.45 & 19.96 $\pm$ 0.44 & 19.40 $\pm$ 0.19 \\
		~ & +PairNorm & \underline{43.86 $\pm$ 0.41} & \underline{40.25 $\pm$ 0.55} & \cellcolor{blue!15}36.03 $\pm$ 1.43 & \underline{29.55 $\pm$ 2.19} & \cellcolor{blue!15}29.05 $\pm$ 0.91 \\
		~ & +ContraNorm  & \cellcolor{blue!15}47.24 $\pm$ 0.66 & \cellcolor{blue!15}40.31 $\pm$ 0.74 & \underline{35.85 $\pm$ 1.58} & \cellcolor{blue!15}32.37 $\pm$ 0.93 & \underline{27.80 $\pm$ 0.72} \\
		\bottomrule    
	\end{tabular}
	}
	\label{table:gnn-acc}
 \vspace{-0.1 in}
\end{table}

\subsection{Ablation Study}

Recalling Section \ref{section:contranorm}, we improve ContraNorm with stop-gradient technique (SG) by masking the second term. For solving data representation instability, we apply layer normalization (LN) to the original version, while for the convenience of theoretical analysis, layer normalization is replaced by $L_2$ normalization ($L_2$N). Here, we investigate the effect of these tricks and the results are shown in Table \ref{table:ablation-norm}. Compared with the variant with only LayerNorm, ContraNorm with both stop gradient and layer normalization presents better performance. As for the two normalization methods, they are almost comparable to our methods, which verifies the applicability of our theoretical analysis. 

In Appendix \ref{appen:abla}, we further conduct an ablation study regarding the gains of ContraNorm while using different values of scale factor $s$, and show that ContraNorm is robust in an appropriate range of $s$.

\begin{table}[!t]
	\centering
	\caption{Performance comparison among different variants of ContraNorm. SG, LN, $L_2$N are the abbreviations of stop gradient, layer normalization and $L_2$ normalization, respectively. All experiments are conducted on GLUE tasks with the same parameter settings. \textbf{Avg} denotes the average performance on all the tasks. We bold the best result for each task.} 
	\vspace{-0.13 in}
	\resizebox{\textwidth}{!}{
	\begin{tabular}{ccc ccccc ccccc}
		\toprule
		\multicolumn{3}{c}{\textbf{Variants}} & \multicolumn{9}{c}{\textbf{Datasets}} & \multirow{2}{*}{\textbf{Avg}} \\
		\cmidrule(lr){1-3}  \cmidrule(lr){4-12}
		SG & LN & $L_2$N & COLA & SST-2 & MRPC & QQP & STS-B & MNLI-m & MNLI-mm & QNLI & RTE & ~ \\
		\midrule
	    ~ & \checkmark & ~ & 58.80 & \bf93.12 & 89.60 & 88.35 & \bf88.97 & 84.81 & \bf84.67 & 91.47 & 68.23 & 83.11 \\
		\checkmark & ~ & \checkmark & \bf59.82 & 93.00 & 89.64 & \bf88.37 & 88.92 & 84.72 & 84.58 & \bf91.58 & \bf68.95 & 83.29 \\
		\checkmark & \checkmark & ~  & 59.57 & \bf93.12 & \bf89.97 & 88.30 & 88.93 & \bf84.84 & \bf84.67 & \bf91.58 & \bf68.95 & \bf83.33 \\
		\bottomrule    
	\end{tabular}
	}
	\label{table:ablation-norm}
 \vspace{-0.1 in}
\end{table}

\section{Conclusion}
In this paper, we point out the deficiencies of current metrics in characterizing over-smoothing, and highlight the importance of taking dimensional collapse as the entry point for oversmoothing analysis. Inspired by the uniformity loss in contrastive learning, we design an optimization-induced normalization layer ContraNorm. Our theoretical findings indicate ContraNorm mitigates dimensional collapse successfully by reducing variance and effective rank of representations. Experiments show that ContraNorm boosts the performance of Transformers and GNNs on various tasks. Our work provides a new perspective from contrastive learning on solving the oversmoothing problem, which helps motivate the following extensive research.

\section*{Acknowledgement}
Yisen Wang is partially supported by the National Key R\&D Program of China (2022ZD0160304), the National Natural Science Foundation of China (62006153), Open Research Projects of Zhejiang Lab (No. 2022RC0AB05), and Huawei Technologies Inc. Xiaojun Guo thanks for Hanqi Yan for her kind guidance on the implementation of BERT models. We thank anonymous reviewers for their constructive advice on this work.

\bibliography{iclr2023_conference}
\bibliographystyle{iclr2023_conference}

\newpage

\appendix

\section{More Details on Dual ContraNorm} \label{appen:time}

\subsection{Methodology}

The most time-consuming operation of ContraNorm is the matrix multiplication. Given $\mathbf{H} \in {\mathbb{R}^{n \times d}}$, where $n$ and $d$ denote the number of samples in a batch and feature size respectively, the time complexity of ContraNorm is $\mathcal{O}(n^2d)$, which is the same order as the self-attention operation in Transformer. Empirically, we report the training time of BERT with or without ContraNorm on GLUE tasks in Table \ref{table:time}. All experiments are conducted on a single NVIDIA GeForce RTX 3090. On average, we raise the performance of BERT on GLUE tasks from 82.59\% to 83.54\% (see Table \ref{table:nlp-dev}) with less than 4 minutes overhead. We think the time overhead is acceptable considering the benefits it brings. 

\begin{table}[h]
	\centering
	\caption{Estimated training time of BERT with or without ContraNorm on GLUE tasks. All experiments are conducted on a single NVIDIA GeForce RTX 3090. $s$ is the abbreviation for second. \textbf{Avg} denotes the average training time across all the tasks.} 
	\resizebox{\textwidth}{!}{
	\begin{tabular}{l ccccccccc}
		\toprule
		\textbf{Dataset} & COLA & SST-2 & MRPC & QQP & STS-B & MNLI-m (/mm) & QNLI & RTE & \textbf{Avg}\\
		\midrule
		BERT & 110$s$ & 851$s$ & 74$s$ & 6458$s$ & 110$s$ & 8186$s$ & 2402$s$ & 74$s$ & 2283$s$ \\
		+ContraNorm & 125$s$ & 983$s$ & 80$s$ & 7150$s$ & 122$s$ & 8941$s$ & 2594$s$ & 80s & 2509$s$ \\
		\bottomrule    
	\end{tabular}
	}
	\label{table:time}
\end{table}

However, the computation bottleneck will become more apparent when applying ContraNorm to large-scale graphs with $n\gg d$. And we draw our inspirations from the duality between contrastive and non-contrastive learning \citep{garrido2022duality} to resolve this issue. It is shown that the uniformity loss used for deriving ContraNorm is equivalent (under limited assumptions) to the feature decorrelation loss adopted in non-contrastive learning. As the computation complexity of the decorrelation loss is $\gO(nd^2)$ that has a linear dependence on $n$. For the sake of consistency with the original ContraNorm, we choose the dual version of the InfoNCE uniformity loss, named VICReg-exp proposed in \cite{garrido2022duality}, and apply it to the graph features $\mH\in\sR^{n\times d}$, leading to the following VICReg-exp decorrelation for $\mH$,
\begin{equation}
L_{\rm VICReg-exp}(\mH)=\sum_{i}\log\sum_je^{\vh_{:,i}^\top \vh_{:,j}},
\end{equation}
where $\vh_{:,i}$ denotes the $i$-th column vector (i.e., a feature) of $\mH$. We can observe its close resemblance to the InfoNCE uniformity loss in Eq.~(\ref{eq:uniform}). Following the same procedure as in Section \ref{section:contranorm}, we can drive the gradient descent rule of the decorrelation loss and arrive at the following normalization layer named Dual ContraNorm (ContraNorm-D):
\begin{equation}
   \mbox{ContraNorm-D}(\mathbf{H}) = \mbox{LayerNorm}(\mathbf{H}-s/\tau \times \mathbf{H}\times\mbox{softmax}(\mathbf{H}^T\mathbf{H})), 
\end{equation}
which is also very close in form to the ContraNorm layer in Eq.~(\ref{eq-contra-3}).
We can see that ContraNorm-D only involves computing the feature correlation matrix $\mH^\top \mH$ of complexity $\gO(nd^2)$. The linear dependence on the node number $n$ makes it very suitable for large-scale graphs. We note that other forms of feature decorrelation objectives in non-contrastive learning \citep{zbontar2021barlow,bardes2021vicreg} can also be applied, and we leave a more systematic study of their effects on neural network dynamics to future work.

\subsection{BERT Experiments on GLUE Benchmark}

To verify whether it performs equally well as ContraNorm, we conduct experiments on the validation set of GLUE tasks. The learning rate of BERT with ContraNorm-D is set to $4e-5$ and other experiment setups are the same in Section \ref{sec:nlp-task}. As shown from Table \ref{table:decorrelate}, for each task the performance of BERT with ContraNorm-D surpasses the vanilla BERT, and the average performance is raised from $82.59\%$ to $84.21\%$. The results imply effectiveness of this modified version of ContraNorm, which can also be explained with the relationship between Gram matrix ($\mathbf{G} = \mathbf{H}\mathbf{H}^T \in \mathbb{R}^{n \times n}$ ) and covariance matrix ( $\mathbf{C} = \mathbf{H}^T\mathbf{H} \in \mathbb{R}^{d \times d}$) \citep{ali2021xcit}.

\begin{table}[h]
	\centering
	\caption{Results comparison of BERT with or without ContraNorm-D on validation set of GLUE tasks. Following \citet{devlin2018bert}, we report F1 scores for QQP and MRPC, Spearman correlations for STS-B, and accuracy scores for the other tasks. \textbf{Avg} denotes the average performance on all the tasks and bold denotes the best performance.} 
	\resizebox{\textwidth}{!}{
	\begin{tabular}{l cccccccccc}
		\toprule
		\textbf{Dataset} & COLA & SST-2 & MRPC & QQP & STS-B & MNLI-m & MNLI-mm & QNLI & RTE & \textbf{Avg}\\
		\midrule
		BERT & 55.28 & 92.89 & 88.96 & 88.24 & 88.61 & 84.65 & 84.61 & 91.51 & 68.59 & 82.59 \\
		+ ContraNorm-D & \bf62.08 & \bf93.69 & \bf91.78 & \bf88.36 & \bf89.59 & \bf85.24 & \bf85.19 & \bf91.95 & \bf70.04 & \bf84.21 \\
		\bottomrule    
	\end{tabular}
	}
	\label{table:decorrelate}
\end{table}

\subsection{GNN Experiments on Large-scale OGB Benchmark}

\textbf{Setup.} In order to assess the efficacy of ContraNorm-D in enhancing existing GNN techniques, we conduct tests on the ogbn-arxiv \citep{hu2020open} benchmark, which consists of a large number of nodes (169,343) and edges (1,166,243), rendering ContraNorm unsuitable for scalability. We choose the high-ranking GIANT \citep{chien2021node} method from the OGB leaderboard \footnote{See \url{https://ogb.stanford.edu/docs/leader_nodeprop/\#ogbn-arxiv}.} as the backbone, and combine it with RevGAT \cite{li2021training} and self-knowledge distillation (selfKD) as suggested in the original paper. We follow the experiment settings in \cite{chien2021node}, but reduce the optimization step for each layer of the hierarchical label trees to 1,000 (as opposed to 2,000 in the original method). For hyperparameters in ContraNorm-D, we set the scale $s$ to $2$ and the temperature $\tau$ to $5$. We insert a ContraNorm-D layer after the graph convolution operation. Our implement is based on the GIANT repository \footnote{See \url{https://github.com/elichienxD/deep_gcns_torch}.}. 

\textbf{Results.} As shown in Table \ref{table:cn-d}, despite the various tricks that have been applied to the GIANT* setting, the use of our ContraNorm-D variant can still boost its test performance from 76.15\% to 76.39\%. At the time of paper revision (April 17, 2023), we find that this gain is significant enough to improve the rank of GIANT* from the 6th place to the 2nd place, and achieves \textbf{SOTA results on the ogbn-arxiv dataset at this parameter scale}, since the 1st ranking method GLEM+RevGAT has more than 100x parameters than our model. This finding indicates that ContraNorm-D can serve as a plug-and-play component and boost existing state-of-the-art GNNs on large-scale graph datasets.

\begin{table}[h]
	\centering
	\caption{Comparison of GIANT* (GIANT-XRT+RevGAT+KD) with or without ContraNorm-D on the test set of ogbn-arxiv benchmark. } 
	\begin{tabular}{l cc|c}
		\toprule
		\textbf{Method} & GIANT* & GIANT* + ContraNorm-D & GLEM+RevGAT (SOTA) \\
		\midrule
		Test Accuracy & 76.15 & \textbf{76.39} & 76.94 \\
            Params & 1.3M & \textbf{1.3M} & 140.4M \\
            Peak Memory & 5.11G & 6.09G & - \\
            Average epoch time & 0.44s & 0.61s & - \\
		Rank on OGB leaderboard &  6 & \textbf{2} & 1 \\
  \bottomrule    
	\end{tabular}
	\label{table:cn-d}
\end{table}

\section{Additional Discussions and Ablating Experiments}

In this section, given more space, we elaborate more details on the choice of our architectural designs and evaluate their effects on real-world experiments.

\subsection{Explaining and Evaluating The Stop-Gradient Operation in Eq.~(\ref{eq-contra-1})}
\label{stop-gradient}
In this section, we will illustrate the stop-gradient operation in Eq.~(\ref{eq-contra-1}) by using the framework proposed by \cite{tao2022exploring}. The original update form should be Eq.~(\ref{eq-contra}):
\begin{equation*}
    \mH_t = \mH_b - s\times\frac{\partial \hat{\Ls}_{\rm{uniform}}}{\partial \mH_b}=\mH_b - s/\tau \times (\mD^{-1}\mA + \mA\mD^{-1})\mH_b.
\end{equation*}
We take SimCLR \citep{chen2020simple} as the contrastive learning framework. \cite{tao2022exploring} have studied the stop-gradient form of SimCLR and illustrated that the stop-gradient operation will make a similar performance with the original one. Based on this, we will elaborate on how $\mathbf{A}\mathbf{D}^{-1}$ is removed in the following part. In fact, we can directly illustrate how the second term in Eq.~(\ref{eq-d}) can be omitted.
\begin{equation*}
    \frac{\partial \hat{\Ls}_{\rm{uniform}}}{\partial \vh_i} 
    = \sum_j \frac{\exp(\vh_i^{\top} \vh_j/\tau)}{\sum_k \exp(\vh_i^{\top} \vh_k/\tau)} \vh_j/\tau + \sum_j \frac{\exp(\vh_i ^{\top}\vh_j/\tau)}{\sum_k \exp(\vh_j ^{\top}\vh_k/\tau)} \vh_j/\tau.
\end{equation*}

In SimCLR, by denoting the normalized features from the online branch as $u^o_i, i=1,2,\dots,n$, the normalized features from the target branch (although the two branches have no differences in SimCLR) are $u^t_i,i=1,2,\dots,n$ and $\mathcal{V}=\{u_1^o,u_2^o,\dots,u_n^o,u_1^t,u_2^t,\dots,u_n^t\},$ the SimCLR loss can be represented as
$$L=-\frac{1}{2n}\sum_{u_1\in\mathcal{V}}\log\frac{\exp(u_1^{\top}u_1'/\tau)}{\sum_{u\in\mathcal{V}/u_1}\exp(u_1^{\top}u/\tau)}$$
where $u_1$ and $u_1'$ are positive pairs and $\tau$ is the temperature. Then the gradient of $L$ on $u_1^o$ can be calculated as
$$\frac{\partial L}{\partial u_1^o}=\frac{1}{2\tau n}\left(-u_1^t+\sum_{v\in{\mathcal{V}/u_1^o}}s_vv\right)+\frac{1}{2\tau n}\left(-u_1^t+\sum_{v\in\mathcal{V}/u_1^o}t_vv\right)$$
where
$$s_v=\frac{\exp(u_1^{o\top}v/\tau)}{\sum_{v'\in\mathcal{V}/u_1^o}\exp(u_1^{o\top}v'/\tau)}$$
is the softmax results over similarities between $u_1^o$ and other samples, and
$$t_v=\frac{\exp(v^{\top}u_1^o/\tau)}{\sum_{v'\in\mathcal{V}/v}\exp(v^{\top}v'/\tau)}$$
is computed over similarities between sample $v$ and its contrastive samples $\mathcal{V}/v$. We can see that the first term of $\partial L/\partial u_1^o$ comes from the part which takes $u_1^o$ as the anchor, and the second term comes from the part which takes the other feature as the anchor. \cite{tao2022exploring} proposes to stop the second term and verifies that stopping the second gradient term will not affect the performance empirically.

Note that the $u_1^t$ term in the gradient is from the alignment loss. So the gradient of the uniformity loss on $u_1^o$ can be written as
\begin{equation}\frac{\partial L_{\rm{uniform}}}{\partial u_1^o}=\frac{1}{2\tau n}\left(\sum_{v\in{\mathcal{V}/u_1^o}}s_vv\right)+\frac{1}{2\tau n}\left(\sum_{v\in\mathcal{V}/u_1^o}t_vv\right)\label{tao-uniform}\end{equation}
It is noteworthy that by writing $\mathcal{V}=\{h_1,h_2,\dots,h_N\}$, Eq.~(\ref{eq-d}) shares the same form as Eq.~(\ref{tao-uniform}). By adopting the stop-gradient method just as \cite{tao2022exploring} takes, we remove the second term in Eq.~(\ref{eq-d}), which is just the $\mA\mD^{-1}$ term in Eq.~(\ref{eq-contra}). 

Empirically, we draw the singular value distribution of embeddings for vanilla BERT and +ContraNorm with only $\mA\mD^{-1}$ term or $\mD^{-1}\mA$ term on RTE task. As shown in Figure \ref{fig:singular-ablation}, compared with vanilla BERT with a long-tail distribution (dimensional collapse), adding ContraNorm with $\mD^{-1}\mA$ and $\mA\mD^{-1}$ both reduce the number of insignificant (nearly zero) singular values and make a more balanced distribution. The similar singular value distributions mean that they play a similar role in alleviating dimensional collapse. 

\begin{figure}[h]
    \centering
    \includegraphics[width=0.6\textwidth]{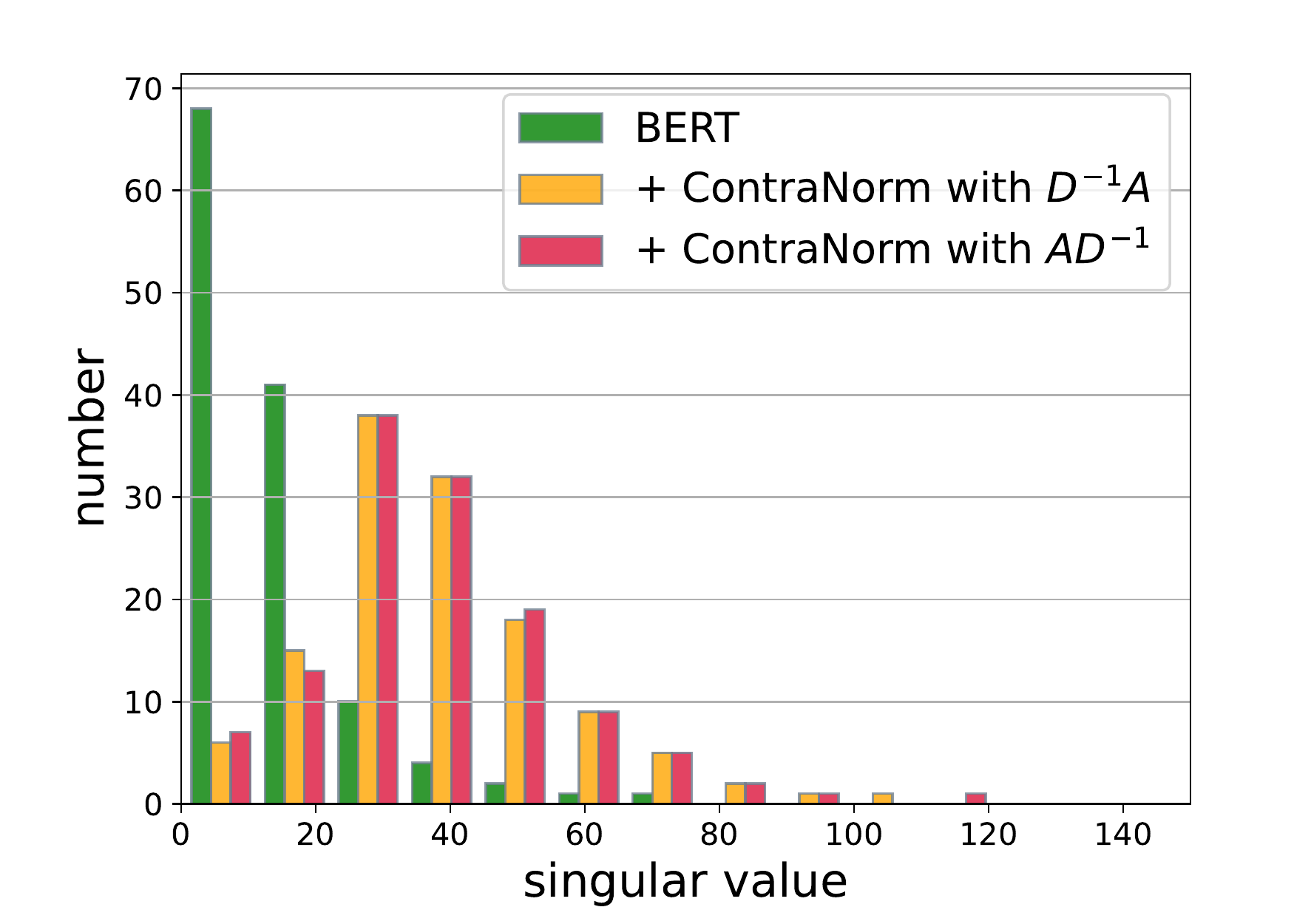}
    \caption{Singular value distributions with ContraNorm using only $\mA\mD^{-1}$ term or $\mD^{-1}\mA$ term. Experiments are conducted with 12-layer BERT on RTE task.}
    \label{fig:singular-ablation}
\end{figure}

\subsection{Comparing ConraNorm to Uniformity Regularization} \label{appen:uniformity}
We conduct a comparative experiment on BERT model with straightforwardly applied uniformity loss and our proposed ContraNorm. Specifically, we add the uniformity loss ($loss_{\rm{uni}}$) to the classification loss (MSELoss, CrossEntropyLoss or BCELoss depending on the task type, denoted by $loss_{\rm{cls}}$). Formally, the final loss is written as

$$loss_{\rm{total}} = loss_{\rm{cls}} + loss_{\rm{uni}},$$

where $loss_{\rm{uni}} = \sum_{i=1}^{N} \log \sum_{j=1}^N \exp(f(\mathbf{x}_i)^Tf(\mathbf{x}_j) / \tau)$ and $N$ is the number of samples in the batch. We tune $\tau$ in the range of $[0.5, 0.8, 1.0, 1.2, 1.5]$ and choose the best one in terms of average performance. Other hyperparameters are kept the same as settings of ContraNorm. The results are shown in the Table \ref{table:uniform}

\begin{table}[h]
	\centering
	\caption{Results comparison on validation set of GLUE tasks. Following \cite{devlin2018bert}, we report F1 scores for QQP and MRPC, Spearman correlations for STS-B, and accuracy scores for the other tasks. \textbf{Avg} denotes the average performance on all the tasks. For each task, the best performance is bolded.}
	\vspace{0.1cm}
	\resizebox{\textwidth}{!}{
	\begin{tabular}{l cccccccccc}
		\toprule
		\textbf{Dataset} & COLA & SST-2 & MRPC & QQP & STS-B & MNLI-m & MNLI-mm & QNLI & RTE & \textbf{Avg}\\
		\midrule
		BERT-base & 55.28 & 92.89 & 88.96 & 88.24 & 88.61 & 84.65 & 84.61 & 91.51 & 68.59 & 82.59 \\
		BERT + Uniformity Loss & 58.08 & 93.00 & 89.46 & 88.14 & \bf88.69 & 84.45 & 84.43 & 91.60 & 68.59 & 82.94 \\
		BERT + ContraNorm & \bf58.83 & \bf93.12 & \bf89.49 & \bf88.30 & 88.66 & \bf84.87 & \bf84.66 & \bf91.78 & \bf70.76 & \bf83.39 \\
		\bottomrule    
	\end{tabular}
	}
	\label{table:uniform}
\end{table}

We can see that +ContraNorm gets the best score in 8 / 9 tasks, while +Uniformity loss reaches the best only in 1 / 9 tasks. ContraNorm also has the highest average score among all tasks. The reason is that updating the total loss is a combined process for objectives of correct classification and uniform distribution. Thus, a lower $loss_{\rm{total}}$ may be only caused by a lower classification loss while uniformity loss is kept the same, which cannot ensure a more uniform distribution of representations. In contrast, ContraNorm acts directly on representations in each layer and enforces the uniform property.

In fact, there are many methods in GNNs such as \citet{yang2021graph} and \citet{zhu2021interpreting}, which design the propagation mechanism under the guidance of the corresponding objective. The well-designed propagation mechanism is shown to be the most fundamental part of GNNs \citep{zhu2021interpreting}. Instead of directly using the loss function, these methods transfer the loss function into a specific propagation method and achieve superior performance, which indicates that changing the network may be more effective than directly adding the objective to the loss function.

\subsection{Analysis of the plugging position of ContraNorm} \label{appen:pos}
We explore two ways to integrate ContraNorm into BERT and ALBERT. Concretely, consider the update of $\mH^{(l)}$ in $l$-th block of Transformers 
\begin{align}
    &\mH^{(l)} = \mbox{MultiHeadAttention}(\mH^{(l)}), \label{eq:a1}\\
    &\mH^{(l)} = \mH^{(l)} + \mX, \label{eq:a2}\\
    &\mH^{(l)} = \mbox{LayerNorm}(\mH^{(l)}) \label{eq:a3},
\end{align}
where $\mX = \mH_b$ is the input tensor.

We choose positions 1) between Eq.~(\ref{eq:a1}) and Eq.~(\ref{eq:a2}), named as \textit{before-residual}; 2) between Eq.~(\ref{eq:a2}) and Eq.~(\ref{eq:a3}), named as \textit{after-residual}. The performance comparison between the two positions on GLUE datasets is shown in Table \ref{table:pos}. It is observed that putting ContraNorm after the residual connection slightly outperforms that before residual. Therefore, we choose the after-residual variant as our basic ContraNorm.

\begin{table}[h]
	\centering
	\caption{Results comparison on different plugging positions of ContraNorm. Experiments are evaluated on the validation set of GLUE tasks. \textbf{Avg} denotes the average performance on all the tasks. The best result for each task is bolded.} 
	\resizebox{\textwidth}{!}{
	\begin{tabular}{l cccccccccc}
		\toprule
		\textbf{Dataset} & COLA & SST-2 & MRPC & QQP & STS-B & MNLI-m & MNLI-mm & QNLI & RTE & \textbf{Avg}\\
		\midrule
		BERT-base & 55.28 & 92.89 & 88.96 & 88.24 & 88.61 & 84.65 & 84.61 & 91.51 & 68.59 & 82.59 \\
		before-residual & 59.57 & 93.12 & 89.97 & 88.30 & 88.93 & 84.84 & 84.67 & 91.58 & 68.95 & 83.33 \\
		after-residual & 58.83 & 93.12 & 89.49 & 88.30 & 88.66 & 84.87 & 84.66 & 91.78 & 70.76 & 83.39\\
		\bottomrule    
	\end{tabular}
	}
	\label{table:pos}
\end{table}

\section{More Experimental Details}

\section{Introduction of graph datasets} \label{appen:data}

Here, we introduce the properties of the graph datasets discussed in this work.

\textbf{Citation Network.} Cora, CiteSeer are three popular citation graph datasets. In these graphs, nodes represent papers and edges correspond to the citation relationship between two papers. Nodes are classified according to academic topics.

\textbf{Wikipedia Network.} Chameleon and Squirrel are Wikipedia page networks on specific topics, where nodes represent web pages and edges are the mutual links between them. Node features are the bag-of-words representation of informative nouns. The nodes are classified into four categories according to the number of the average monthly traffic of the page.

Statics of datasets are shown in Table \ref{table:graph}.

\begin{table}[h]
\caption{Graph datasets statics.}\label{table:graph}
\begin{center}
\begin{tabular}{ll ccccc}
    \toprule
        Category                &Dataset    &\# Nodes    &\# Edges    &\# Features &Degree    &\# Classes\\ 
    \midrule
    \multirow{2}*{Citation}     &Cora       &2708       &5278       &1433       &4.90       &7\\
            ~                   &CiteSeer   &3327       &4552       &3703       &3.77       &6\\
    \midrule 
    \multirow{2}*{Wikipedia}    &Chameleon  &2277       &36101      &500        &5.0        &6\\
            ~                   &Squirrel   &5201       &217073     &2089       &154.0      &4\\
    \bottomrule
\end{tabular}
\end{center}
\end{table}

\subsection{Metrics calculating feature and attention map cosine similarity } \label{appen:metrics}
Following \citet{wang2022anti}, given feature map $\mH \in \R^{N \times D}$ and attention map of the $h$-th head $\mA^{(h)} \in \R^{N \times N}$ with batch size $N$ and hidden embedding size $D$, the feature cosine similarity and the attention cosine similarity is computed as
\begin{equation*}
    \mbox{sim}_f = \frac{2}{N(N-1)}\sum_{i, j>i} \frac{{\vh_i}^{\top}\vh_j}{\Vert \vh_i \Vert_2 \Vert \vh_j \Vert_2}, \quad
    \mbox{sim}_{\rm{attn}} = \frac{2}{N(N-1)H}\sum_{i, j>i} \frac{{\va^{(h)}_i}^{\top}\va^{(h)}_j}{\Vert \va^{(h)}_i \Vert_2 \Vert \va^{(h)}_j \Vert_2},
\end{equation*}
where $\vh_i$ denotes the $i$-th row of $\mH$, $\va_i$ is the $i$-th column of $\mA$, and $H$ is the number of attention heads. 

\section{Additional Experimental Results}

\subsection{Results on test set of GLUE datasets} \label{appen:test}
We submit our trained model to GLUE benchmark leaderboard and the resultant feedback of performance is shown in Table \ref{table:nlp-test}. 

\begin{table}[h]
	\centering
	\caption{Results comparison on test set of GLUE tasks. \textbf{Avg} denotes the average performance on all the tasks.} 
	\resizebox{\textwidth}{!}{
	\begin{tabular}{l cccccccccc}
		\toprule
		\textbf{Dataset} & COLA & SST-2 & MRPC & QQP & STS-B & MNLI-m & MNLI-mm & QNLI & RTE & \textbf{Avg}\\
		\midrule
		BERT-base & 53.3 & 92.8 & 86.8 & 71.2 & 82.8 & 84.4 & 83.2 & 90.9 & 66.3 & 79.1 \\
		BERT + ContraNorm & 54.5 & 93.0 & 87.9 & 71.4 & 83.0 & 84.5 & 83.4 & 91.0 & 66.9 & 79.5\\
		\bottomrule    
	\end{tabular}
	}
	\label{table:nlp-test}
\end{table}

\subsection{More results of the Ablation Study} \label{appen:abla}
 We conduct experiments using BERT+ContraNorm with varying scaling factor $s$ on GLUE datasets. For each dataset, we vary the normalization scale around the best choice, and other parameters remain consistent. As shown in Figure \ref{fig:ablation-full}, the results illustrate that model with ContraNorm is robust in an appropriate range of normalization scaling factor.

\begin{figure}[h]
     \centering
     \begin{subfigure}{0.25\textwidth}
         \centering
         \includegraphics[width=\textwidth]{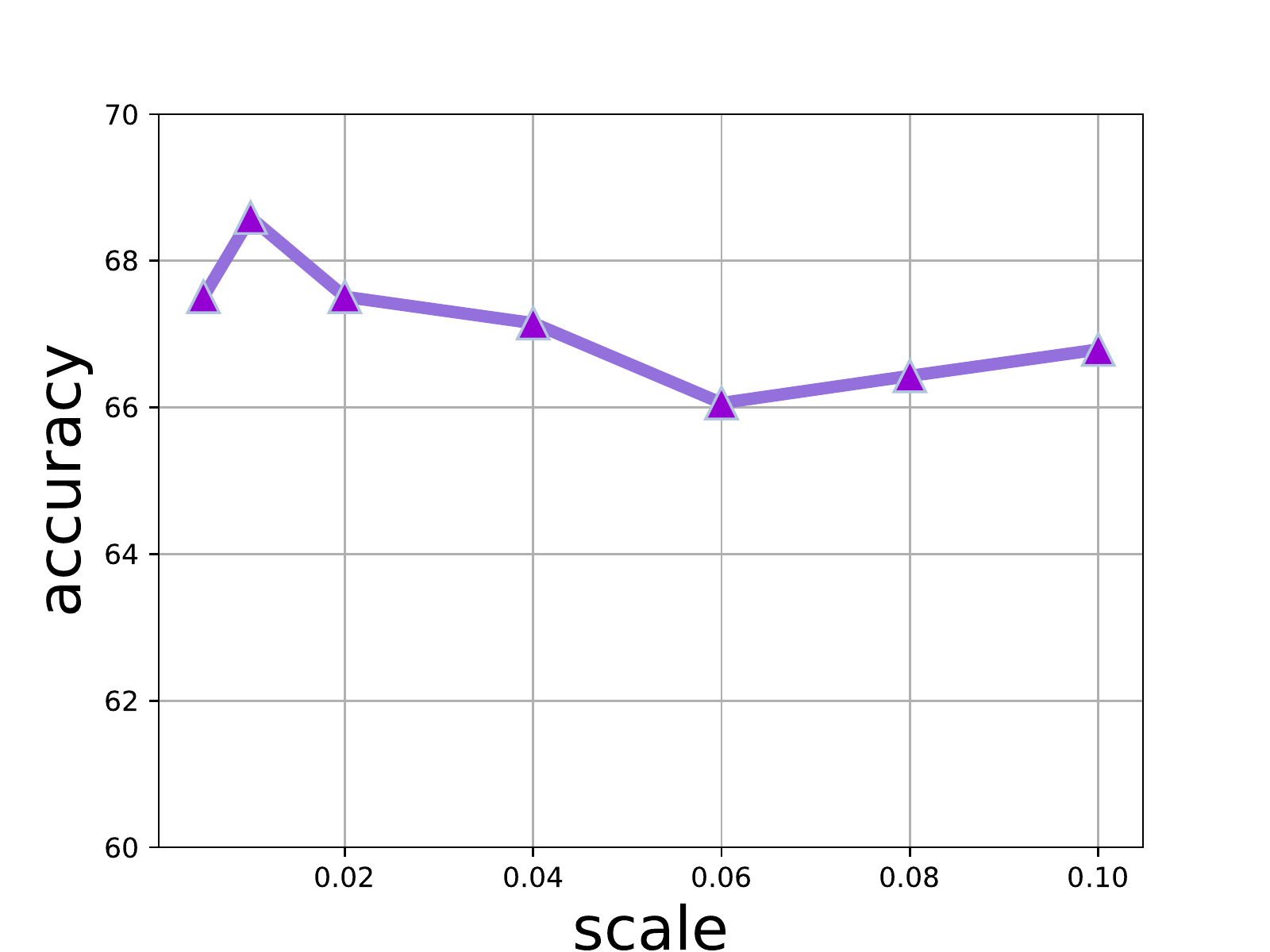}
         \caption{RTE}
     \end{subfigure}
     \begin{subfigure}{0.25\textwidth}
         \centering
         \includegraphics[width=\textwidth]{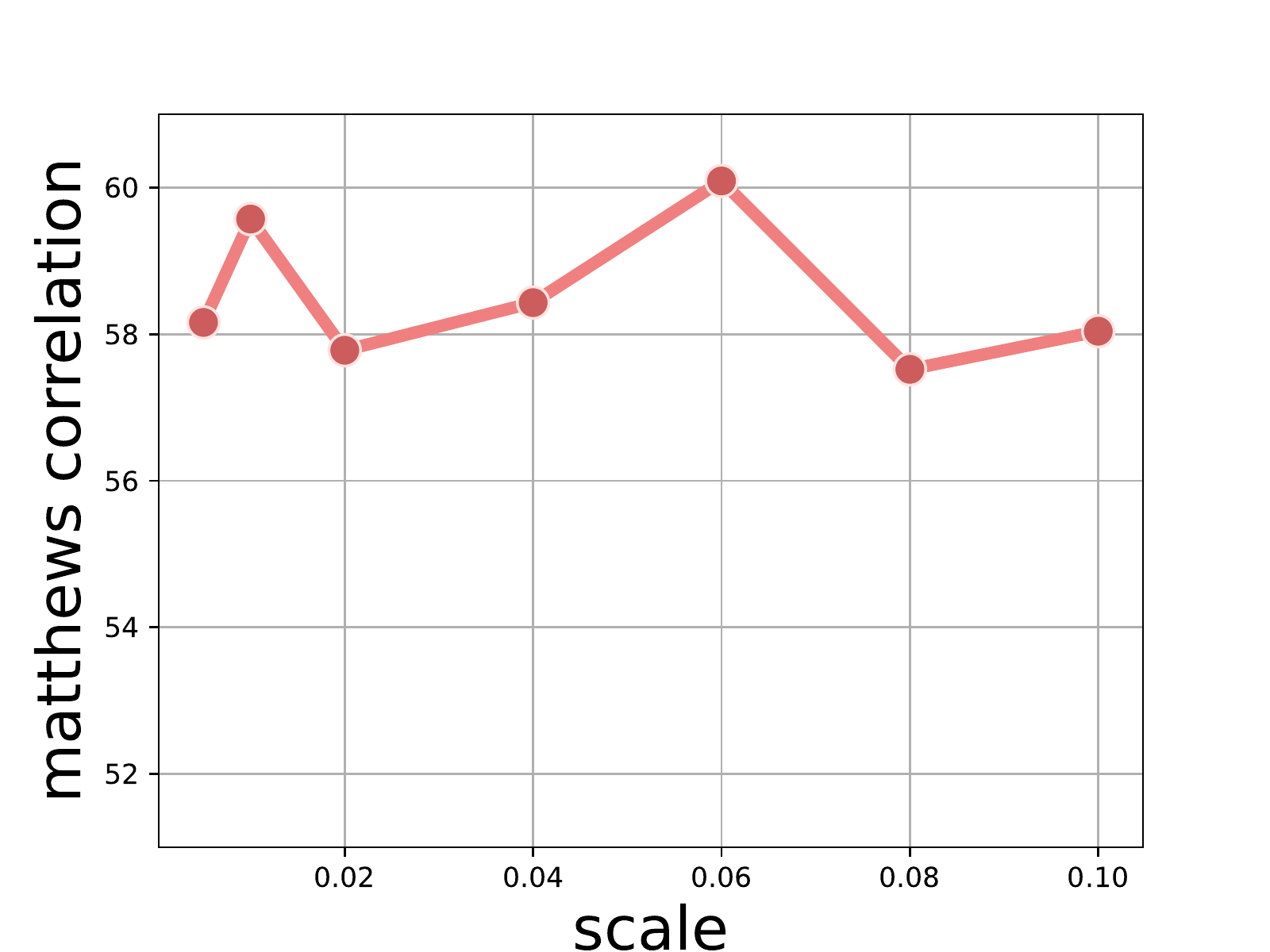}
         \caption{CoLA}
     \end{subfigure}
     \begin{subfigure}{0.25\textwidth}
         \centering
         \includegraphics[width=\textwidth]{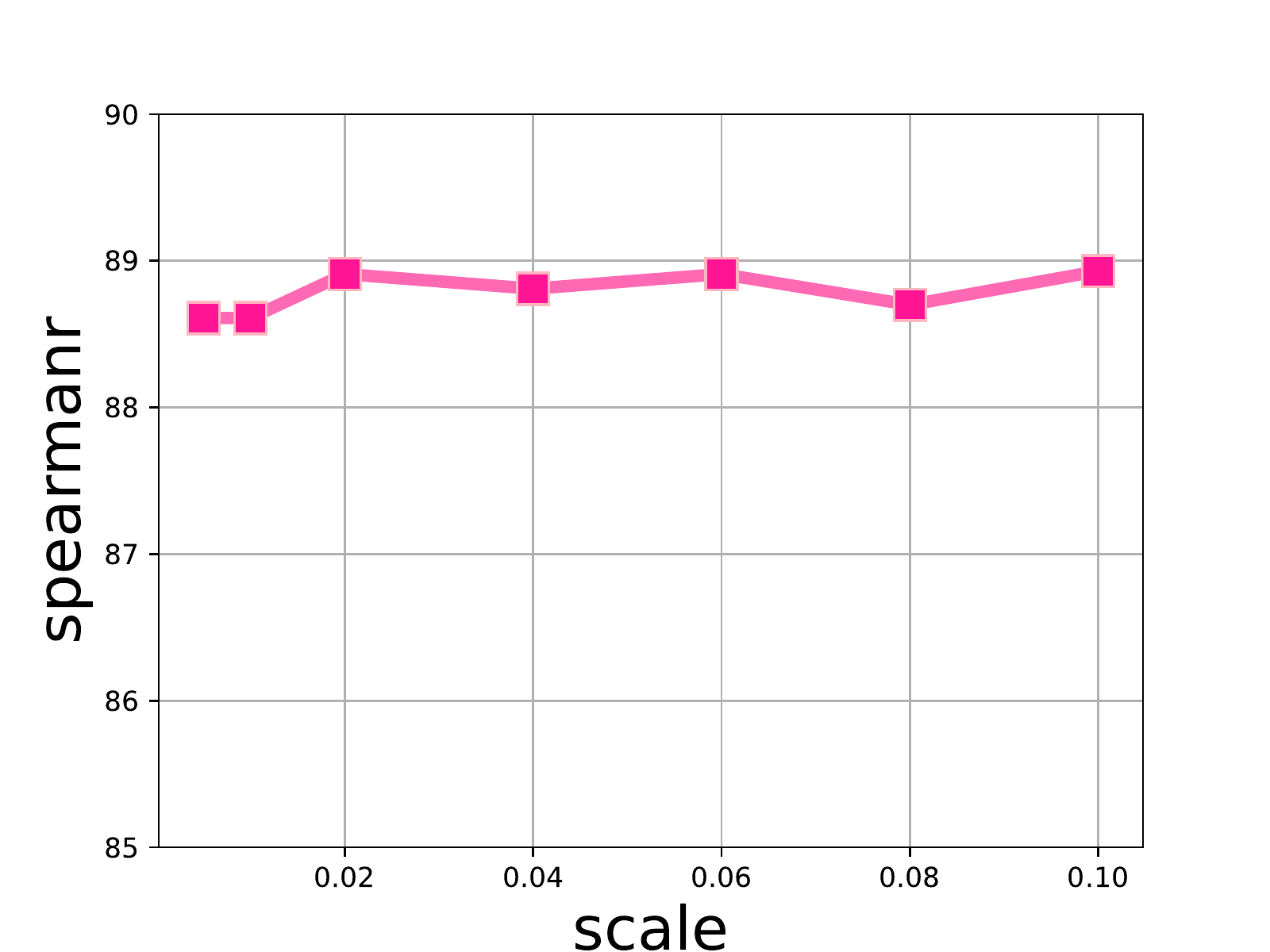}
         \caption{STS-B}
     \end{subfigure}
     \hfill
     \begin{subfigure}{0.25\textwidth}
         \centering
         \includegraphics[width=\textwidth]{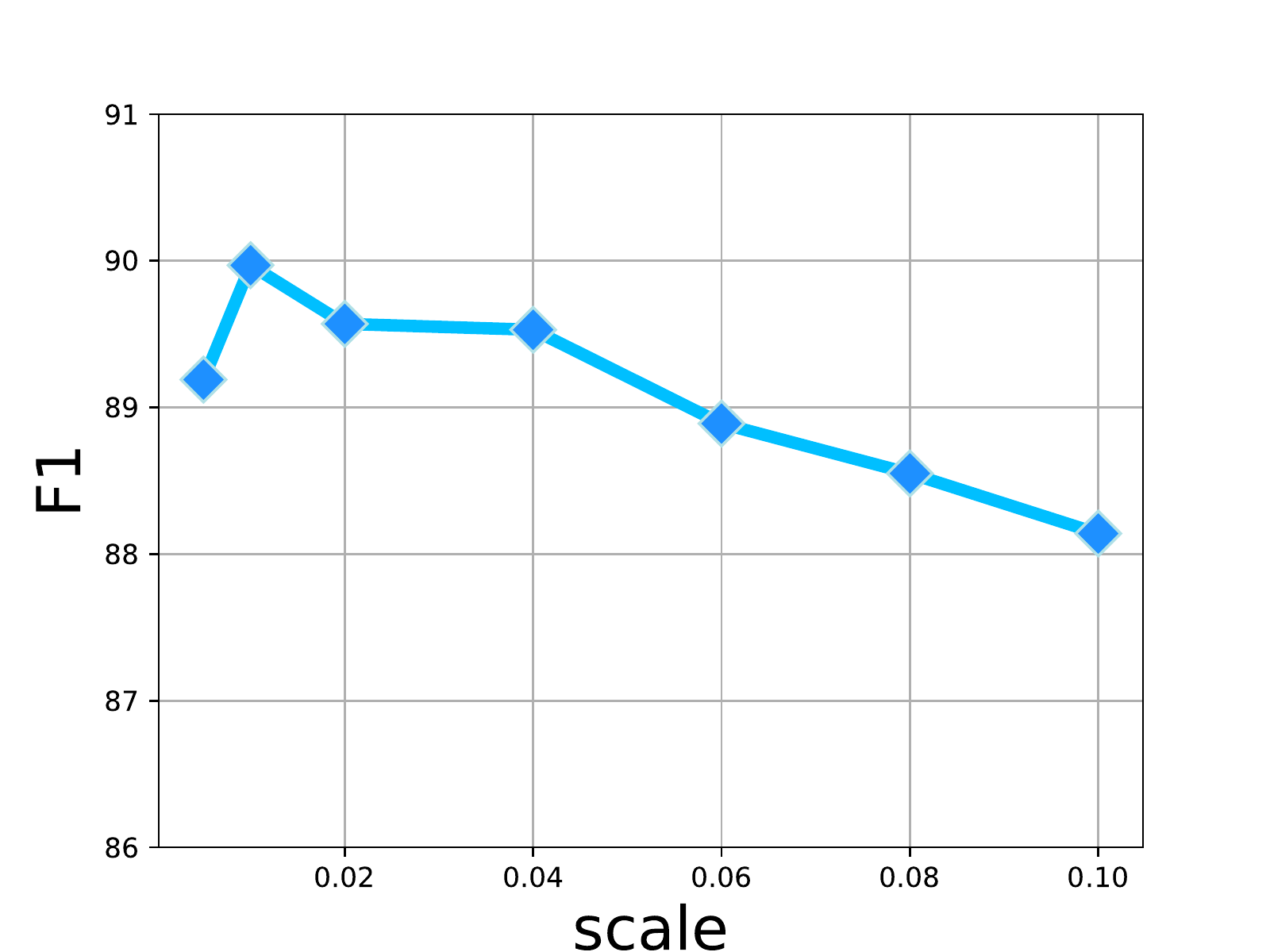}
         \caption{MRPC}
     \end{subfigure}
     \begin{subfigure}{0.25\textwidth}
         \centering
         \includegraphics[width=\textwidth]{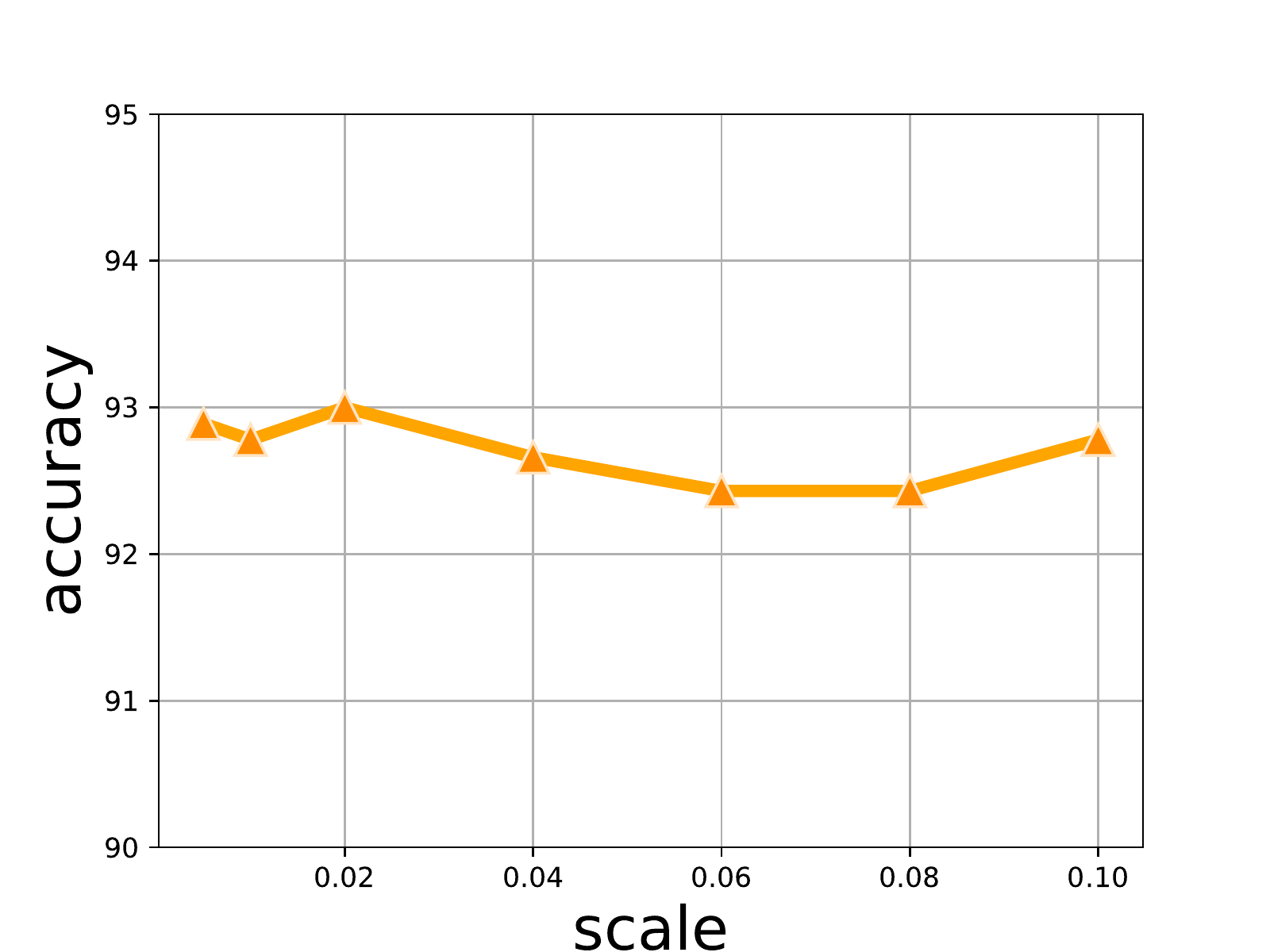}
         \caption{SST2}
     \end{subfigure}
     \begin{subfigure}{0.25\textwidth}
         \centering
         \includegraphics[width=\textwidth]{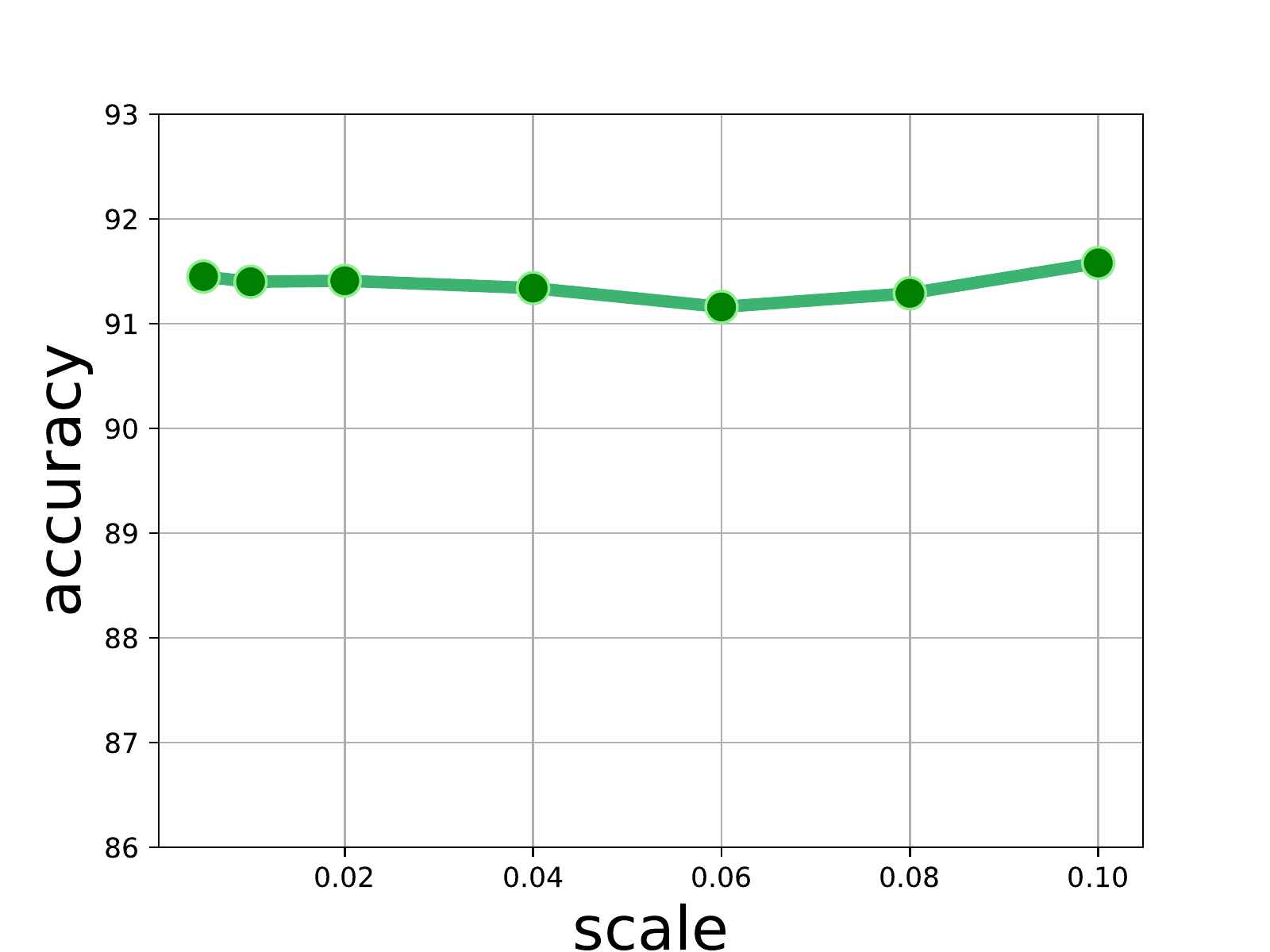}
         \caption{QNLI}
     \end{subfigure}
     \hfill
     \begin{subfigure}{0.25\textwidth}
         \centering
         \includegraphics[width=\textwidth]{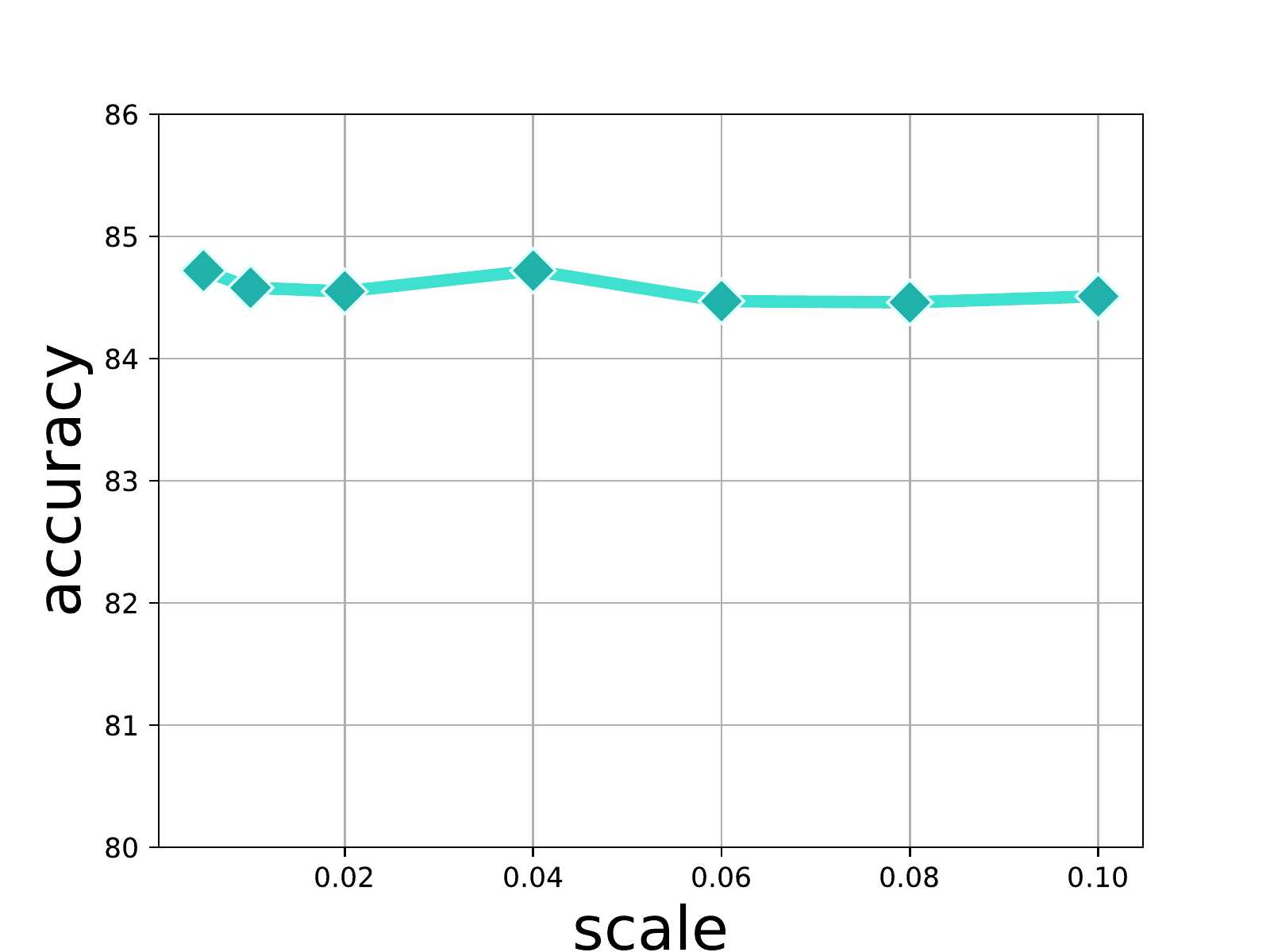}
         \caption{MNLI-m}
     \end{subfigure}
     \begin{subfigure}{0.25\textwidth}
         \centering
         \includegraphics[width=\textwidth]{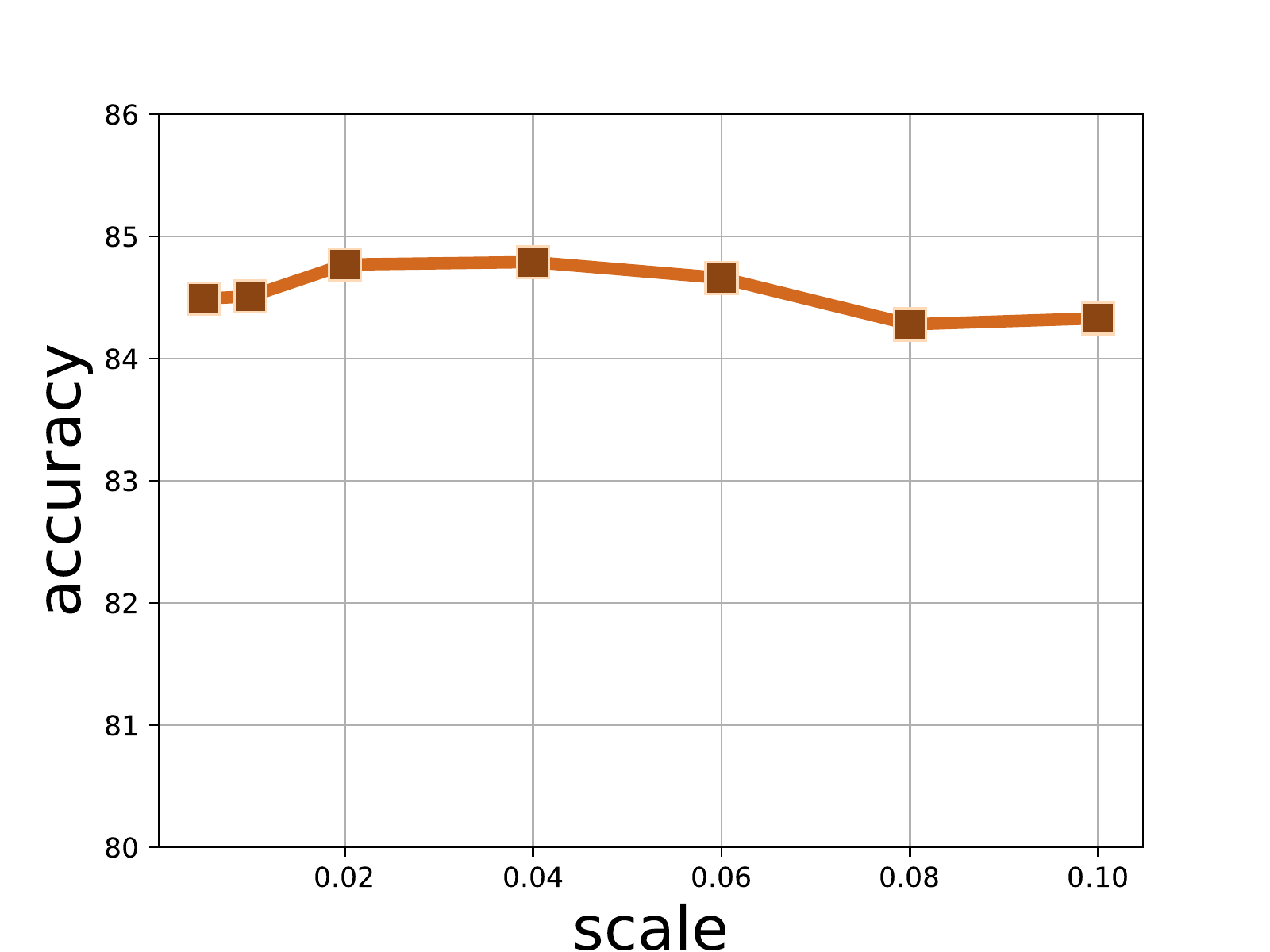}
         \caption{MNLI-mm}
     \end{subfigure}
     \begin{subfigure}{0.25\textwidth}
         \centering
         \includegraphics[width=\textwidth]{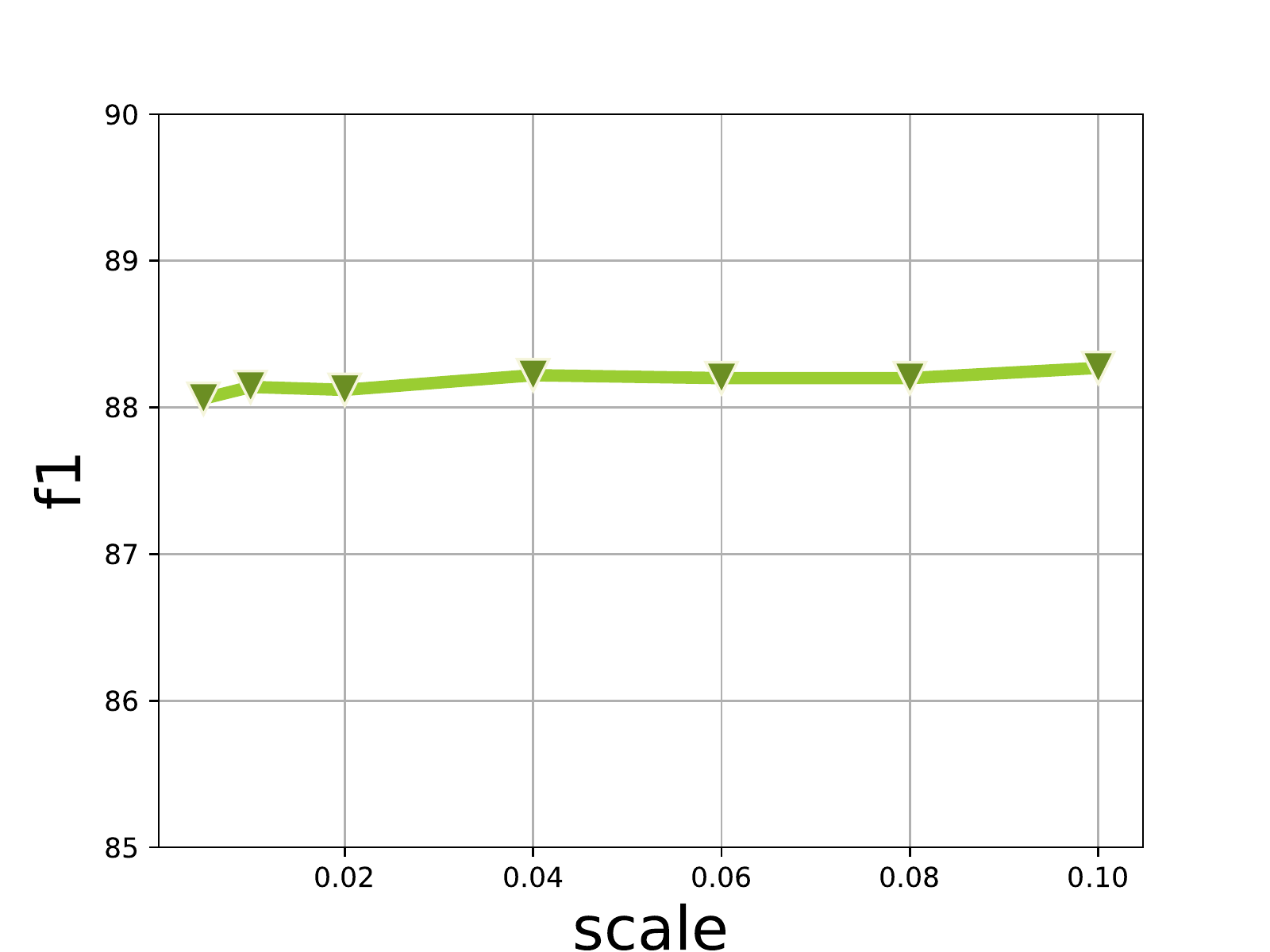}
         \caption{QQP}
     \end{subfigure}

    \caption{Performance when varying scale factor $s$ on GLUE datasets. We choose BERT as the base model. The varying range is an interval containing the best choice of $s$. Here, we fix it to $[0.005, 0.1]$ for all tasks.}
    \label{fig:ablation-full}
\end{figure}

\section{Omitted Proofs} \label{appen:proof}
Here, we present the complete proof for Propositions in Section \ref{sec:theoretical-analysis}.

First, we give two lemmas that will be useful for the proof.
\begin{lemma}
    Denote $\ve=[1,1,\dots,1]^{\top}/\sqrt{n}$. For any update form $\mX_1=\mP\mX_0$ and $\lambda>0$, if the eigenvalues of $(\mI-\ve\ve^{\top})-\lambda\mP^{\top}(\mI-\ve\ve^{\top})\mP$ are all not larger than zero, then we have $\Var(\mX_1)\geq \lambda^{-1}\cdot \Var(\mX_0)$. \label{lemma1}
\end{lemma}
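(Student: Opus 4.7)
The plan is to recognize that the variance of a matrix with $n$ rows can be written as a trace quadratic form involving the centering projector. Specifically, since $\ve\ve^{\top} = \tfrac{1}{n}\vone\vone^{\top}$, the matrix $\mI - \ve\ve^{\top}$ is the orthogonal projector onto the subspace orthogonal to the all-ones direction. Writing $\Var(\mX)$ as (up to a fixed factor $1/n$) the squared Frobenius norm of the row-centered matrix, idempotence and symmetry of $\mI - \ve\ve^{\top}$ give
\begin{equation*}
\Var(\mX) \;=\; \tfrac{1}{n}\bigl\|(\mI-\ve\ve^{\top})\mX\bigr\|_F^2 \;=\; \tfrac{1}{n}\Tr\bigl(\mX^{\top}(\mI-\ve\ve^{\top})\mX\bigr).
\end{equation*}

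Next I would substitute $\mX_1=\mP\mX_0$ into this formula and rearrange the target inequality $\Var(\mX_1)\geq \lambda^{-1}\Var(\mX_0)$ into the equivalent trace inequality
\begin{equation*}
\Tr\bigl(\mX_0^{\top}\bigl[\lambda\mP^{\top}(\mI-\ve\ve^{\top})\mP-(\mI-\ve\ve^{\top})\bigr]\mX_0\bigr) \;\geq\; 0.
\end{equation*}
Both $\mI-\ve\ve^{\top}$ and $\mP^{\top}(\mI-\ve\ve^{\top})\mP$ are symmetric, so the bracketed matrix is symmetric. The hypothesis that all eigenvalues of $(\mI-\ve\ve^{\top})-\lambda\mP^{\top}(\mI-\ve\ve^{\top})\mP$ are $\leq 0$ is therefore equivalent to its negation $\lambda\mP^{\top}(\mI-\ve\ve^{\top})\mP-(\mI-\ve\ve^{\top})$ being positive semi-definite.

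Finally I would close the argument with the standard fact that for any PSD matrix $\mM$ and any matrix $\mX_0$ with columns $\vx_{0,k}$, one has $\Tr(\mX_0^{\top}\mM\mX_0)=\sum_k \vx_{0,k}^{\top}\mM\vx_{0,k}\geq 0$. Applying this to $\mM=\lambda\mP^{\top}(\mI-\ve\ve^{\top})\mP-(\mI-\ve\ve^{\top})$ yields exactly the desired inequality, and multiplying through by $\lambda^{-1}$ recovers $\Var(\mX_1)\geq \lambda^{-1}\Var(\mX_0)$. There is no substantial obstacle here: the only subtlety worth checking is the symmetry of both $(\mI-\ve\ve^{\top})$ and $\mP^{\top}(\mI-\ve\ve^{\top})\mP$, which legitimizes the translation from "all eigenvalues non-positive" to "negative semi-definite" and hence the clean PSD-trace argument above.
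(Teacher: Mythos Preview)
Your proposal is correct and follows essentially the same approach as the paper: both write $\Var(\mX)=\|(\mI-\ve\ve^{\top})\mX\|_F^2$ (the paper omits the $1/n$ factor, which is immaterial for the ratio), translate the hypothesis into negative semi-definiteness of $(\mI-\ve\ve^{\top})-\lambda\mP^{\top}(\mI-\ve\ve^{\top})\mP$, and conclude via a nonnegativity-of-quadratic-form argument. The only cosmetic difference is that the paper first eigen-decomposes $\mX_0\mX_0^{\top}=\mY\operatorname{diag}(\omega_1,\dots,\omega_n)\mY^{\top}$ and sums $\omega_i\,\vy_i^{\top}\Sigma\,\vy_i$, whereas you apply the PSD-trace inequality directly to the columns of $\mX_0$; your route is a bit more streamlined but not conceptually distinct.
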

\begin{proof}
    We denote $\mX_0\mX_0^{\top}=\mY\text{diag}(\omega_1,\dots,\omega_n)\mY^{\top}$ for the eigen-decomposition of $\mX_0\mX_0^{\top}$, where $\mY=[\vy_1,\vy_2,\dots,\vy_n]$ is the orthogonal basis and all $\omega_i\geq 0$. Note that $(\mI-\ve\ve^{\top})^2=(\mI-\ve\ve^{\top})$. Therefore,
    \begin{equation}
    \begin{aligned}
        \Var(\mX_0)&=\|(\mI-\ve\ve^{\top})\mX_0\|_F^2\\
        &=tr\{|(\mI-\ve\ve^{\top})\mX_0\mX_0^{\top}(\mI-\ve\ve^{\top})\}\\
        &=tr\{(\mI-\ve\ve^{\top})\mY\text{diag}(\omega_1,\dots,\omega_n)\mY^{\top}(\mI-\ve\ve^{\top})\}\\
        &=tr\{\text{diag}(\omega_1,\dots,\omega_n)\mY^{\top}(\mI-\ve\ve^{\top})(\mI-\ve\ve^{\top})\mY\}\\
        &=tr\{\text{diag}(\omega_1,\dots,\omega_n)\mY^{\top}(\mI-\ve\ve^{\top})\mY\}\\
        &=\sum_{i=1}^{n}\omega_i\vy_i^{\top}(\mI-\ve\ve^{\top})\vy_i
    \end{aligned}
    \end{equation}
    Similarly, we have
    \begin{equation}
        \Var(\mX_1)=\sum_{i=1}^{n}\omega_i\vy_i^{\top}\mP(\mI-\ve\ve^{\top})\mP\vy_i.
    \end{equation}
    Therefore, we have
    \begin{equation}
        \Var(\mX_0)-\lambda\Var(\mX_1)=\sum_{i=1}^n\omega_i\vy_i^{\top}\{(\mI-\ve\ve^{\top})-\lambda\mP^{\top}(\mI-\ve\ve^{\top})\mP\}\vy_i.
    \end{equation}
    Thus, if the eigenvalues of $(\mI-\ve\ve^{\top})-\lambda\mP^{\top}(\mI-\ve\ve^{\top})\mP:=\bm{\Sigma}$ are all not larger than zero, $\bm{\Sigma}$ is semi-negative definite, then we have
    \begin{equation}
        \vy_i^{\top}\{(\mI-\ve\ve^{\top})-\lambda\mP^{\top}(\mI-\ve\ve^{\top})\mP\}\vy_i\leq 0,
    \end{equation}
    which implies that $\Var(\mX_0)-\lambda\Var(\mX_1)\leq 0$. Therefore, $\Var(\mX_1)\geq \lambda^{-1}\cdot\Var(\mX_0)$. 
\end{proof}
The second lemma is from the Eq.~(13) in \cite{eigenvalue}.
\begin{lemma}
    Let $A,B,C$ be symmetric matrices and $C=A+B.$ Suppose the eigenvalues of $A$ are  $\alpha_1\geq\alpha_2\geq\dots\geq\alpha_n$, the eigenvalues of $B$ are  $\beta_1\geq\beta_2\geq\dots\geq\beta_n$, and the eigenvalues of $C$ are $\gamma_1\geq\gamma_2\geq\dots\geq\gamma_n$. Then we have the inequality
    \begin{equation}
        \max_{i+j=n+k}\alpha_i+\beta_j\leq\gamma_k\leq\min_{i+j=k+1}\alpha_i+\beta_j.
    \end{equation}\label{lemma2}
\end{lemma}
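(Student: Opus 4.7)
The plan is to prove Lemma \ref{lemma2} via the Courant--Fischer min--max characterization of eigenvalues of symmetric matrices, which states that for a symmetric $M$ with eigenvalues $\mu_1\geq\cdots\geq\mu_n$,
\begin{equation*}
\mu_k \;=\; \max_{\dim(S)=k}\,\min_{\substack{x\in S\\ \|x\|=1}} x^{\top}Mx \;=\; \min_{\dim(S)=n-k+1}\,\max_{\substack{x\in S\\ \|x\|=1}} x^{\top}Mx.
\end{equation*}
I will first prove the upper bound $\gamma_k\leq \min_{i+j=k+1}\alpha_i+\beta_j$ and then derive the lower bound by a duality (negation) argument, which is the standard approach to Weyl-type inequalities.

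For the upper bound, fix any pair $(i,j)$ with $i+j=k+1$. Let $U_A$ denote the span of eigenvectors of $A$ associated with $\alpha_i,\alpha_{i+1},\ldots,\alpha_n$, so $\dim(U_A)=n-i+1$; similarly let $U_B$ be the span of eigenvectors of $B$ associated with $\beta_j,\ldots,\beta_n$, so $\dim(U_B)=n-j+1$. By the standard dimension inequality,
\begin{equation*}
\dim(U_A\cap U_B)\;\geq\;(n-i+1)+(n-j+1)-n\;=\;n-(i+j)+2\;=\;n-k+1.
\end{equation*}
For any unit vector $x\in U_A\cap U_B$, the spectral decomposition gives $x^{\top}Ax\leq \alpha_i$ and $x^{\top}Bx\leq \beta_j$, whence $x^{\top}Cx\leq \alpha_i+\beta_j$. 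Pick any subspace $S\subseteq U_A\cap U_B$ of dimension exactly $n-k+1$. By the min--max form of Courant--Fischer,
\begin{equation*}
\gamma_k \;\leq\; \max_{\substack{x\in S\\ \|x\|=1}} x^{\top}Cx \;\leq\; \alpha_i+\beta_j.
\end{equation*}
Taking the minimum over all admissible $(i,j)$ gives the upper bound.

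For the lower bound, apply the upper bound to $-C=(-A)+(-B)$. The eigenvalues of $-M$ in decreasing order are $-\mu_n,-\mu_{n-1},\ldots,-\mu_1$, so the $k$-th largest eigenvalue of $-C$ is $-\gamma_{n-k+1}$, and analogously for $-A,-B$. Applied at index $k'=n-k+1$, the upper bound yields
\begin{equation*}
-\gamma_k \;\leq\; \min_{i'+j'=k'+1}\bigl(-\alpha_{n-i'+1}\bigr)+\bigl(-\beta_{n-j'+1}\bigr).
\end{equation*}
Substituting $i=n-i'+1$ and $j=n-j'+1$, the constraint $i'+j'=n-k+2$ becomes $i+j=n+k$, and negating both sides flips the $\min$ into a $\max$, producing exactly $\gamma_k\geq \max_{i+j=n+k}\alpha_i+\beta_j$.

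The main obstacle is keeping the index bookkeeping straight---the dimension count for the intersection $U_A\cap U_B$ must line up cleanly with the constraint $i+j=k+1$, and the index substitution in the duality step has to map the upper-bound constraint to $i+j=n+k$ without off-by-one errors. Everything else is a direct application of Courant--Fischer and the spectral theorem, so no further machinery is required.
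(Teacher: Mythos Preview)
Your proof is correct: the Courant--Fischer min--max argument, the dimension count for $U_A\cap U_B$, and the negation/duality step are all carried out properly with the indices lining up as claimed. Note, however, that the paper does not actually prove this lemma---it simply quotes it as Eq.~(13) of an external reference (\cite{eigenvalue})---so you have supplied a self-contained proof of a classical Weyl inequality that the paper merely imports without argument.
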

We can now start to prove Proposition \ref{proposition:complete}.
\begin{manualproposition}{1}
   Let $\ve=(1,1,\dots,1)^{\top}/\sqrt{n}$. For attention matrix $\bar{\mA}=\operatorname{softmax}(\mH_b\mH_b^{\top})$, let $\sigma_{\min}$ be the smallest eigenvalue of  $\mP=(\mI-\ve\ve^{\top})(\mI-\bar{\mA})+(\mI-\bar{\mA})^{\top}(\mI-\ve\ve^{\top})$. For the ContraNorm update $\mH_t=((1+s)\mI-s\bar{\mA})\mH_b,\ s>0$, we have $Var(\mH_t)\geq (1-s\sigma_{\min})^{-1}\cdot Var(\mH_b)$. Especially, if $\sigma_{\min}\geq0$, we have $Var(\mH_t)\geq Var(\mH_b)$.
\end{manualproposition}
\begin{proof}
    We denote $\Sigma=(\mI-\ve\ve^{\top})-\lambda((1+s)\mI-s\bar{\mA})^{\top}(\mI-\ve\ve^{\top})((1+s)\mI-s\bar{\mA})$. Then,
    \begin{equation}
    \begin{aligned}
        \Sigma&=(1-\lambda)\mI-\lambda(s(\mI-\bar{\mA})^{\top}(\mI-\ve\ve^{\top})-s(\mI-\ve\ve^{\top})(\mI-\bar{\mA})-s^2(\mI-\bar{\mA})^{\top}(\mI-\ve\ve^{\top})(\mI-\bar{\mA}))\\
        &=(1-\lambda)\mI-\lambda s\mP-\lambda s^2(\mI-\bar{\mA})^{\top}(\mI-\ve\ve^{\top})(\mI-\bar{\mA}).
    \end{aligned}
    \end{equation}
    Let $\alpha_1\geq\alpha_2\geq\dots\geq\alpha_n$ be the eigenvalues of $\Sigma$. Since $\mI-\ve\ve^{\top}$ has a eigenvalue of 0 and $n-1$ eigenvalues of 1, $\mI-\ve\ve^{\top}$ is a semi-definite positive matrix. Thus, $s^2(\mI-\bar{\mA})^{\top}(\mI-\ve\ve^{\top})(\mI-\bar{\mA})$ is also a semi-definite positive matrix. Notice that the largest eigenvalue of $(1-\lambda)\mI-s\mP$ is $(1-\lambda)-s\sigma_{\min}$ and the largest eigenvalue of $-s^2(\mI-\bar{\mA})^{\top}(\mI-\ve\ve^{\top})(\mI-\bar{\mA})$ is $0$. Therefore, by Lemma \ref{lemma2}, the largest eigenvalue of $\Sigma$ is less or equal to $(1-\lambda)-s\sigma_{\min}$. Let $\lambda=1-s\sigma_{\min}$, then the largest eigenvalue of $\Sigma$ is less or equal to $0$. By Lemma \ref{lemma1}, we have $Var(\mH_t)\geq (1-s\sigma_{\min})^{-1}\cdot Var(\mH_b)$.
    
    Moreover, if $\sigma_{\min}\geq 0$, then $(1-s\sigma_{\min})^{-1}\geq 1$, leading to $Var(\mH_t)\geq Var(\mH_b)$.
\end{proof}
\textbf{Remark.} Now we discuss some sufficient conditions for $\sigma_{\min}\geq0.$ If $\mP$ is a diagonally dominant matrix, then we will have the result $\sigma_{\min}\geq0.$ Denote $a_{ij}=\bar{A}_{ij}$, $b_{ij}=\sigma_{ij}-a_{ij}$ and $\mQ=(\mI-\ve\ve^{\top})(\mI-\bar{A})$, where $\sigma_{ij}=1$ if $i=j$ and $\sigma_{ij}=0$ if $i\neq j$, then we have
\begin{equation}
    \mQ_{ij}=b_{ij}-\frac{1}{n}\sum_{k=1}^n(b_{kj}).
\end{equation}
If we have $\sum_{k=1}^na_{kj}\leq 1+na_{ij}$ for any $i,j$, then we will have
\begin{equation}
    b_{jj}\geq\frac{\sum_{k=1}^nb_{kj}}{n}\geq b_{ij},\ i\neq j .
\end{equation}
Notice that $\sum_{k=1}^nb_{kj}=0$, we have
\begin{equation}
    |Q_{jj}|=|\sum_{k\neq j}Q_{kj}|
\end{equation}
Since $\bar{A}$ is an attention matrix, we also have
\begin{equation}
    |Q_{jj}|=|\sum_{k\neq j}Q_{jk}|.
\end{equation}
Therefore, we have
\begin{equation}
    \begin{aligned}
        |\mP_{jj}|&=2|\mQ_{jj}|\\
        &=|\sum_{k\neq j}Q_{kj}|+|\sum_{k\neq j}Q_{jk}|\\
        &\geq |\sum_{k\neq j}Q_{kj}+\sum_{k\neq j}Q_{jk}|\\
        &\geq |\sum_{k\neq j}\mP_{kj}|,
    \end{aligned}
\end{equation}
which indicates that $\mP$ is a diagonally dominated matrix, thus $\sigma_{\min}\geq 0.$ Therefore, $\sum_{k=1}^na_{kj}\leq 1+na_{ij}$ for any $i,j$ is just a sufficient condition. A special case for this is $\sum_{k=1}^na_{kj}=1, \forall k$.

We now move on to the proof of Proposition \ref{proposition:dimensional}. We first give a lemma on the property of effective rank.
\begin{lemma}
    Let the eigenvalues of $AA^{\top}$ be $\lambda_1\geq\lambda_2\geq\dots\geq\lambda_n$ and the eigenvalues of $BB^{\top}$ be $\sigma_1\geq\sigma_2\geq\dots\sigma_n$. If $\sigma_i/\lambda_i$ is increasing as $i$ increases, then we have $\operatorname{erank}(B)\geq \operatorname{erank}(A).$ \label{lemma3}
\end{lemma}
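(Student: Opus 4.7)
The plan is to reduce the statement to a majorization inequality between the two normalized singular-value distributions defining the effective ranks, then invoke Schur concavity of Shannon entropy. The singular values of $A$ and $B$ are $\sqrt{\lambda_i}$ and $\sqrt{\sigma_i}$ respectively, so the distributions entering $\operatorname{erank}$ are
\[
p^{(A)}_i=\frac{\sqrt{\lambda_i}}{\sum_k\sqrt{\lambda_k}},\qquad p^{(B)}_i=\frac{\sqrt{\sigma_i}}{\sum_k\sqrt{\sigma_k}},
\]
with $\operatorname{erank}(A)=\exp H(p^{(A)})$ and $\operatorname{erank}(B)=\exp H(p^{(B)})$. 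Since $x\mapsto\sqrt{x}$ is monotone, the hypothesis that $\sigma_i/\lambda_i$ is non-decreasing in $i$ yields non-decreasingness of $r_i:=\sqrt{\sigma_i/\lambda_i}$, and hence of the likelihood ratio $f_i:=p^{(B)}_i/p^{(A)}_i = r_i\cdot(\sum_k\sqrt{\lambda_k})/(\sum_k\sqrt{\sigma_k})$.

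Next I would verify that both $p^{(A)}$ and $p^{(B)}$ are already in decreasing order (inherited from the orderings of $\lambda_i$ and $\sigma_i$) and then establish the majorization $p^{(A)}\succeq p^{(B)}$, i.e., $\sum_{i=1}^k p^{(A)}_i\ge \sum_{i=1}^k p^{(B)}_i$ for every $k$, with equality at $k=n$. The standard route from a monotone likelihood ratio to majorization is to write $\sum_{i=1}^k (p^{(A)}_i-p^{(B)}_i)=\sum_{i=1}^k p^{(A)}_i(1-f_i)$ and observe that, because $\sum_i p^{(A)}_i(1-f_i)=0$ and $1-f_i$ is non-increasing in $i$, the sequence $1-f_i$ changes sign at most once (from $+$ to $-$); the partial sums are thus non-negative for $k$ up to the sign change, and for larger $k$ they equal the negation of a tail whose summands are all non-positive, hence still non-negative.

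Finally, because Shannon entropy is symmetric and concave on the simplex, it is Schur concave, so $p^{(A)}\succeq p^{(B)}$ yields $H(p^{(A)})\le H(p^{(B)})$; exponentiating gives $\operatorname{erank}(A)\le\operatorname{erank}(B)$, as desired. The main obstacle, I expect, is the bookkeeping around the majorization step: one must confirm that $p^{(A)}$ and $p^{(B)}$ are each in decreasing order, and handle tail indices with $\lambda_i=\sigma_i=0$ (where $f_i$ would be ill-defined) by restricting to the shared support and padding the remaining entries with zeros, which does not change either entropy. Once these bookkeeping issues are settled, the appeal to Schur concavity is classical.
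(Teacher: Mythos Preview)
Your argument is correct. The reduction to the normalized singular-value distributions $p^{(A)}$ and $p^{(B)}$ is exactly what the definition of effective rank requires, and the monotone likelihood ratio $f_i=p^{(B)}_i/p^{(A)}_i$ (which is a constant multiple of $\sqrt{\sigma_i/\lambda_i}$) does give the majorization $p^{(A)}\succeq p^{(B)}$ via the sign-change argument you outline: the partial sums $S_k=\sum_{i\le k}p^{(A)}_i(1-f_i)$ are sums of nonnegative terms up to the crossing index, and for larger $k$ they equal the negative of a nonpositive tail because $S_n=0$. Since both $p^{(A)}$ and $p^{(B)}$ are already arranged in decreasing order (inherited from $\lambda_i$ and $\sigma_i$), this is precisely the majorization condition, and Schur concavity of entropy then yields $H(p^{(A)})\le H(p^{(B)})$. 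The zero-eigenvalue bookkeeping is indeed routine: indices with $\lambda_i=\sigma_i=0$ contribute nothing to either entropy or to any partial sum, and the hypothesis forces the supports to be nested in the right direction.

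As for comparison with the paper: the paper does not actually give a proof of this lemma. Its entire argument is the single sentence ``This lemma can be proved just by using the definition of the effective rank.'' Your majorization/Schur-concavity route is the natural way to make that sentence precise, and it is strictly more informative than what the paper provides. If anything, your proposal supplies the missing details rather than taking a different path.
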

This lemma can be proved just by using the definition of the effective rank.
\begin{manualproposition}{2}
Consider the update form
    \begin{equation}
    \mH_t=(1+s)\mH_b-s(\mH_b\mH_b^{\top})\mH_b,
    \end{equation}
    let $\sigma_{\max}$ be the largest singular value of $\mH_b$. For $s>0$ satisfying $1+(1-\sigma^2_{\max})s>0$, we have $\operatorname{erank}(\mH_t)>\operatorname{erank}(\mH_b)$.
\end{manualproposition}
\begin{proof}
    We have
    \begin{equation}
        \mH_t=((1+s)\mH_b-s\mH_b\mH_b^{\top})\mH_b.
    \end{equation}
    Therefore,
    \begin{equation}
        \mH_t\mH_t^{\top}=s^2(\mH_b\mH_b^{\top})^3-2s(1+s)(\mH_b\mH_b^{\top})^2+(1+s)^2(\mH_b\mH_b^{\top})
    \end{equation}
    Suppose $\lambda_1\geq\lambda_2\geq\dots\geq\lambda_n$ are the eigenvalues of $\mH_b\mH_b^{\top}$, then $\mH_t\mH_t^{T}$ has the same eigenvectors as $\mH_b\mH_b^{\top}$, and its eigenvalues are $(\lambda_is-(1+s))^2\lambda_i$. Since $s$ satisfies $1+(1-\sigma^2_{\max}s)>0$, we have $1+s>\lambda_i s$. Therefore, $(\lambda_is-(1+s))^2$ is increasing as $i$ increases, resulting the fact that $\operatorname{erank}(\mH_t)>\operatorname{erank}(\mH_b)$ by using Lemma \ref{lemma3}. 
\end{proof}

\end{document}